\newcommand{\id}{{\mathrm{id}}}
\newcommand{\orb}{{\mathrm{orb}}}
\newcommand{\mean}{\mathbb{E}}
\newcommand{\var}{{\rm I\kern-.3em D}}
\newcommand{\cond}{\,|\,}
\newcommand{\Normal}{\mathcal{N}}
\newcommand{\norm}[1]{\left\lVert#1\right\rVert}
\newcommand{\eps}{\varepsilon}
\DeclareMathOperator*{\infimum}{inf}
\newtheorem{theorem}{Theorem}
\newtheorem*{theorem*}{Theorem}
\newtheorem{proposition}{Proposition}
\newtheorem*{proposition*}{Proposition}
\newtheorem{corollary}{Corollary}
\DeclareMathSymbol{\shortminus}{\mathbin}{AMSa}{"39}
\title{Orbital MCMC}
\author{%
  Kirill Neklyudov \thanks{Parts of this work was done while the author was at the Higher School of Economics, Moscow, Samsung-HSE laboratory.} \\
  University of Amsterdam\\
  \texttt{k.necludov@gmail.com} \\
   \And
   Max Welling \\
   University of Amsterdam, CIFAR\\
  \texttt{m.welling@uva.nl} \\
}
\begin{document}

\maketitle

\begin{abstract}
Markov Chain Monte Carlo (MCMC) algorithms ubiquitously employ complex deterministic transformations to generate proposal points that are then filtered by the Metropolis-Hastings-Green (MHG) test.  However, the condition of the target measure invariance puts restrictions on the design of these transformations. In this paper, we first derive the acceptance test for the stochastic Markov kernel considering arbitrary deterministic maps as proposal generators. When applied to the transformations with orbits of period two (involutions), the test reduces to the MHG test. Based on the derived test we propose two practical algorithms: one operates by constructing periodic orbits from any diffeomorphism, another on contractions of the state space (such as optimization trajectories). Finally, we perform an empirical study demonstrating the practical advantages of both kernels.
\end{abstract}

\section{Introduction}
\begin{figure}[h]
    \centering
    \includegraphics[width=0.9\textwidth]{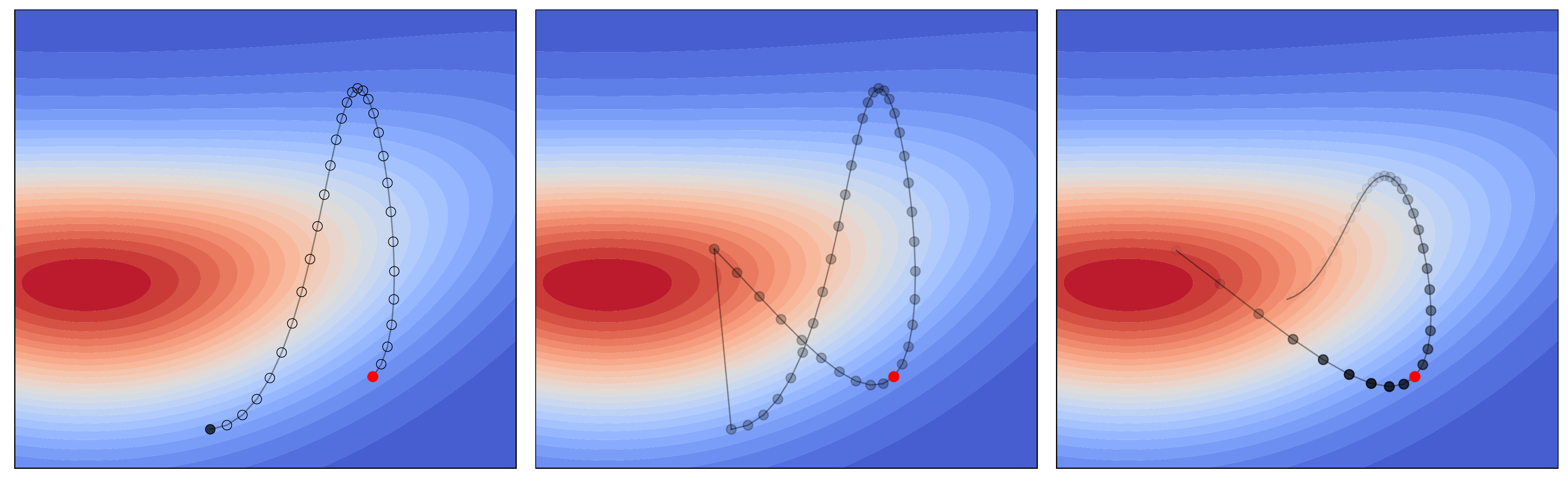}
    \caption{The illustration of the proposed algorithms. The transparency of a point corresponds to its weight. Red dots correspond to the initial states. From left to right: HMC (stochastically accepts only the last state), Orbital kernel on a periodic orbit (accepts all the states weighted), Orbital kernel on an infinite orbit (accepts the states within a certain region).}
    \label{fig:intro}
\end{figure}
MCMC is an ubiquitous analysis tool across many different scientific fields such as statistics, bioinformatics, physics, chemistry, machine learning, etc.
Generation of the proposal samples for continuous state spaces usually includes sophisticated deterministic transitions. 
The most popular example of such transitions is the flow of Hamiltonian dynamics \citep{duane1987hybrid, hoffman2014no}.
More recently, the combination of neural models and conventional MCMC algorithms has attracted the community's attention \citep{song2017nice, hoffman2019neutra},
and such hybrid models already found their application in the problems of modern physics \citep{kanwar2020equivariant}.
These developments motivate further studies of the possible benefits of deterministic functions in MCMC methods.

One possible way to introduce a deterministic transition into the MCMC kernel is to consider an involutive function inside a stochastic transition kernel as described in \citep{neklyudov2020involutive, spanbauer2020deep} (we recap Involutive MCMC in Section \ref{sec:imcmc}).
One of the practical benefits of this approach is that it allows one to learn the kernel as a neural model improving its mixing properties \citep{spanbauer2020deep}.

Our paper extends prior works by deriving novel transition kernels that do not rely on involutions.
We start by deriving the novel acceptance test that is applicable for any diffeomorphism and operates on its orbits.
However, in its general form the test is infeasible to compute requiring the evaluation of the infimum across the orbit.
To obtain practical algorithms we consider two special cases. 
Our first algorithm operates on periodic orbits, and we demonstrate how one can easily design such orbits by introducing auxiliary variables (Section \ref{sec:orbital_kernel}).
The second algorithm operates on infinite orbits of optimization trajectories.
We demonstrate how such orbits could be truncated preserving the target measure (see section \ref{sec:diffusing_orbital_kernel}).
Finally, we provide an empirical study of these algorithms and discuss their advantages and disadvantages (Section \ref{sec:practice}).

\section{Background}
\label{sec:background}
\subsection{Mean ergodic theorem}
The core idea of the MCMC framework is the mean ergodic theorem, namely, its adaptation to the space of distributions.
That is, consider the $\phi$-irreducible Markov chain (see \cite{roberts2004general}) with the kernel $k(x'\cond x)$ that keeps the distribution $p$ invariant.
Denoting the single step of the chain as an operator $K: [Kq](x') = \int dx\; k(x'\cond x)q(x)$, the mean ergodic theorem says that its time average projects onto the invariant subspace:
\begin{align}
     \lim_{n\to \infty} \norm{\frac{1}{n}\sum_{i=0}^{n-1} K^i p_0 - p} = 0, \; \text{ for all }\; p_0.
     \label{eq:mean_ergodic_theorem}
\end{align}
Note, that the irreducibility condition implies the uniqueness of the invariant distribution.
In practice, the application of $K$ is not tractable, though. 
Therefore, one resorts to Monte Carlo estimates by accepting a set of samples $\{x_0,\ldots, x_{n-1}\}$, each coming with a weight $1/n$, and distributed as $x_i \sim [K^i p_0](x)$.
Thus, the MCMC field includes the design and analysis of kernels that allow for projections as in \eqref{eq:mean_ergodic_theorem}.

\subsection{Involutive MCMC}
\label{sec:imcmc}
In this section, we briefly discuss the main idea of the Involutive MCMC (iMCMC) framework \citep{neklyudov2020involutive, spanbauer2020deep}, which is the initial point of our further developments.
The iMCMC framework starts by considering the kernel
\begin{align}
    k(x'\cond x) = \delta(x'-f(x))g(x)+\delta(x'-x)(1-g(x)), \;\;
    g(x) = \min\bigg\{1, \frac{p(f(x))}{p(x)} \bigg| \frac{\partial f}{\partial x}\bigg| \bigg\},
    \label{eq:imcmc_kernel}
\end{align}
where $p(x)$ is the target density, $\delta(x)$ is the Dirac delta-function and $f(x)$ is a diffeomorphism.
The stochastic interpretation of this kernel is the following. 
Starting at the point $x$ the chain accepts $f(x)$ with the probability $g(x)$ or stays at the same point with the probability $(1-g(x))$.

The iMCMC framework then requires the kernel \eqref{eq:imcmc_kernel} to preserve the target measure: $[Kp](x') = \int dx\; k(x'\cond x)p(x) = p(x')$ yielding the following equation:
\begin{align}
    \min\bigg\{p(f^{-1}(x))\bigg|\frac{\partial f^{-1}}{\partial x}\bigg| ,p(x)\bigg\} = 
    \min\bigg\{p(x), p(f(x)) \bigg| \frac{\partial f}{\partial x} \bigg| \bigg\} \;\; \forall x.
    \label{eq:x_cond}
\end{align}
This equation allows one to bypass the difficulties of designing a measure-preserving function by choosing $f$ such that $f^{-1}(x) = f(x)$ or, equivalently $f(f(x)) = x$.
Thus, the iMCMC framework proposes a practical way to design $f$ that implies the invariance of $p(x)$ by considering the family of involutions.
However, inserting such $f$ into the transition kernel \eqref{eq:imcmc_kernel} reduces it to jump only between two points: from $x$ to $f(x)$ and then back to $f(f(x)) = f^{-1}(f(x)) = x$.

To be able to cover the support of the target distribution with involutive $f$, the iMCMC framework introduces an additional source of stochasticity into \eqref{eq:imcmc_kernel} through auxiliary variables.
That is, instead of traversing the target $p(x)$, the chain traverses the distribution $p(x,v) = p(x)p(v\cond x)$, where $p(v\cond x)$ is an auxiliary distribution that we are free to choose.
The key ingredients for choosing $p(v\cond x)$ are easy computation of its density and the ability to efficiently sample from it.
Then, interleaving the kernel \eqref{eq:imcmc_kernel} with the resampling of the auxiliary variable $v\cond x$, one can potentially reach any state $[x,v]$ of the target distribution $p(x,v)$ (see Algorithm \ref{alg:imcmc}). Samples from the marginal distribution of interest $p(x)$ can now simply be obtained by ignoring the dimensions that correspond to the variable $v$. 

\begin{wrapfigure}{r}{0.5\textwidth}
\begin{minipage}[H]{0.5\textwidth}
\vspace{-22pt}
\begin{algorithm}[H]
  \caption{Involutive MCMC}
  \begin{algorithmic}  
    \INPUT{target density $p(x)$}
    \INPUT{density $p(v\cond x)$ and a sampler from $p(v\cond x)$} 
    \INPUT{involutive $f(x,v): f(x,v) = f^{-1}(x,v)$}
    \STATE initialize $x$
    \FOR{$i = 0\ldots n$}
        \STATE sample $v \sim p(v \cond x)$
        \STATE propose $(x',v') = f(x,v)$
        \STATE $g(x,v) = \min\{1,\frac{p(x',v')}{p(x,v)} \big|\frac{\partial f(x,v)}{\partial [x,v]}\big|\}$
        \STATE $ x_{i} =
            \begin{cases} 
            x' , \text{ with probability } g(x,v)\\
            x , \;\text{ with probability } (1-g(x,v))
            \end{cases}$
        \STATE $x \gets x_{i}$
    \ENDFOR
    \OUTPUT{samples $\{x_0,\ldots, x_n\}$}
  \end{algorithmic}
  \label{alg:imcmc}
\end{algorithm}
\vspace{-20pt}
\end{minipage}
\end{wrapfigure}


It turns out that by choosing different involutions $f$ and auxiliary distributions $p(v\cond x)$ in Algorithm \ref{alg:imcmc} one can formulate a large class of MCMC algorithms as described in \citep{neklyudov2020involutive}.
However, the involutive property of the considered deterministic map $f$ enforces the iMCMC kernel (Algorithm \ref{alg:imcmc}) to be reversible: $k(x',v'\cond x,v) p(x,v) = k(x,v\cond x',v')p(x',v')$.
Moreover, the marginalized kernel on $x$: $\widehat{k}(x'\cond x) = \int dvdv'\; k(x',v'\cond x,v)p(v\cond x)$ is also reversible: $\widehat{k}(x'\cond x) p(x) = \widehat{k}(x\cond x')p(x')$, which often doesn't mix as fast as an irreversible kernel.
To describe irreversible kernels, the iMCMC framework proposes to compose several reversible kernels.

\section{Orbital kernel}
\label{sec:orbital_kernel}
\subsection{Derivation of the acceptance test}

Starting with the kernel \eqref{eq:imcmc_kernel} the iMCMC framework looks for a family of suitable deterministic maps $f$, and ends up with involutive functions (see Section \ref{sec:imcmc}).
In this work, we approach the problem of kernel design differently: given a deterministic map $f$ we develop a kernel that admits the target density as its eigenfunction, generalizing the iMCMC framework.

We start our reasoning with the kernel (which we call the escaping orbital kernel) $k(x'\cond x) = \delta(x'-f(x))g(x) + \delta(x'-x)(1-g(x))$,
where $g(x)$ is the acceptance test, and the kernel tries to make a move along the trajectory of $f(x)$ as in iMCMC.
Putting this kernel into the condition $Kp = p$: $\int dx\;k(x'\cond x)p(x) =p(x')$, we have
\begin{align}
    \int dy\;\delta(x'-y)g(f^{-1}(y))p(f^{-1}(y))\bigg|\frac{\partial f^{-1}}{\partial y}\bigg| + p(x') - \int dx\;\delta(x'-x)g(x)p(x) = p(x'),
\end{align}
where in the first term we have applied a change of variables to $y=f(x)$. This simplifies to
\begin{align}
    g(x')p(x') = g(f^{-1}(x'))p(f^{-1}(x'))\bigg|\frac{\partial f^{-1}}{\partial x}\bigg|_{x=x'}.
    \label{eq:preserving_gp_init}
\end{align}
Let's assume that $x'$ is an element of the orbit $\orb(x_0)$, which we can describe using the notation of iterated functions ($f^0(x) = \id(x) = x, f^{n+1}(x) = f(f^n(x))$) as $\orb(x_0) = \{f^k(x_0), k \in \mathbb{Z}\}$.
Then putting $x' = f^k(x_0), \; k\in\mathbb{Z}$ into \eqref{eq:preserving_gp_init} we get
\begin{align}
    \begin{split}
    \forall k \in \mathbb{Z} \;\;\;\; g(f^{k}(x_0))p(f^{k}(x_0))\bigg|\frac{\partial f^{k}}{\partial x}\bigg|_{x=x_0} = g(f^{k-1}(x_0))p(f^{k-1}(x_0))\bigg|\frac{\partial f^{k-1}}{\partial x}\bigg|_{x=x_0}.
    \end{split}
    \label{eq:g-condition}
\end{align}
Thus, instead of a single equation \eqref{eq:preserving_gp_init} at point $x'$ we actually have a system of equations generated by the recursive application of \eqref{eq:g-condition}: $\forall k \in \mathbb{Z}$
\begin{align}
    g(x_0)p(x_0) = g(f^{k}(x_0))p(f^{k}(x_0))\bigg|\frac{\partial f^{k}}{\partial x}\bigg|_{x=x_0}.
    \label{eq:preserving_gp}
\end{align}
The system tells us that the function $f$ must preserve the measure $g(x)p(x)$ on the orbit $\orb(x_0)$.
Note that if $f$ preserves the target $p(x)$ then any constant $g(x) = c$ satisfies this equation.
In this case, we can set $g(x) = 1$ and use iterated applications of $f$ for sampling from the target distribution (ensuring that the chain could densely cover the state space).
However, the design of non-trivial measure-preserving $f$ (especially with dense orbits) is infeasible in most cases.
Therefore, we can assume that $f(x)$ preserves some other density $q(x)$ on the orbit $\orb(x_0)$:
\begin{align}
    q(x_0) = q(f^{k}(x_0))\bigg|\frac{\partial f^{k}}{\partial x}\bigg|_{x=x_0} \;\; \forall k \in \mathbb{Z}.
    \label{eq:q-orbit}
\end{align}
From \eqref{eq:preserving_gp}, we also know that $f$ must preserve $g(x)p(x)$.
Thus, we can think of $g(x)$ as an importance weight that makes $g(x)p(x) \propto q(x)$. 
Interpreting $g(x)$ as a probability ($0 \leq g(x) \leq 1$), we have
\begin{align}
    0 \leq g(f^k(x_0)) = c \cdot \frac{q(f^k(x_0))}{p(f^k(x_0))} \leq 1, \;\; \forall k \in \mathbb{Z}.
    \label{eq:g_prob_condition}
\end{align}
where the constant $c$ makes sure that $g(x)\leq 1$. This is precisely achieved by $c=\infimum_{k \in \mathbb{Z}} \big\{\frac{p(f^k(x))}{q(f^k(x))}\big\}$ for any $x\in \orb(x_0)$. Finally, using the formula for $c$ and equations \eqref{eq:q-orbit}, \eqref{eq:g_prob_condition} at $x = f^k(x_0)$, the test function $g(x)$ for any $x \in \orb(x_0)$ is
\begin{align}
    \begin{split}
    g(x) = \frac{q(x)}{p(x)} \infimum_{k \in \mathbb{Z}} \bigg\{\frac{p(f^k(x))}{q(f^k(x))}\bigg\} = \infimum_{k \in \mathbb{Z}} \bigg\{\frac{p(f^k(x))}{p(x)}\bigg|\frac{\partial f^{k}}{\partial x}\bigg|\bigg\}.
    \end{split}
\end{align}
Note that $c$ is the greatest lower bound on $p(x)/q(x)$ for $x \in \orb(x_0)$, but any lower bound guarantees $g(x) \leq 1$.
The choice of the greatest lower bound is motivated by the stochastic interpretation of the chain, i.e., we want to minimize the probability of staying in the same state.
Although there is a rigorous result \citep{peskun1973optimum} for the comparison of chains via the probabilities of staying in the same state, it applies only for reversible kernels, which is not always applicable to the kernel under consideration.

Putting the test back into the kernel we have the following result.

\begin{theorem}{(Escaping orbital kernel)\\}
\label{th:o_test}
Consider a target density $p(x)$ and continuous bijective function $f$. 
The transition kernel
\begin{align*}
    \begin{split}
    k(x'\cond x) = \delta(x'-f(x)) g(x) + \delta(x'-x)(1-g(x)), \;\;
    g(x) =\infimum_{k \in \mathbb{Z}}\bigg\{\frac{p(f^k(x))}{p(x)} \bigg| \frac{\partial f^k}{\partial x}\bigg| \bigg\},
    \end{split}
    \label{eq:o_kernel}
\end{align*}
yields the operator $[Kp](x') = \int dx\; k(x'\cond x)p(x)$ that keeps the target density $p(x)$ invariant: $Kp = p$.
\end{theorem}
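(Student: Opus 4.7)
The plan is to verify the invariance equation $[Kp](x') = p(x')$ directly, by evaluating the integral defining $K$ and reducing the problem to a pointwise identity along the orbit that follows from the specific infimum form of $g$.

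First I would expand $[Kp](x') = \int dx\,\bigl[\delta(x'-f(x))g(x) + \delta(x'-x)(1-g(x))\bigr]p(x)$. The rejection term contributes $(1-g(x'))p(x')$ immediately. For the proposal term, I use that $f$ is a bijection to evaluate the delta via $y=f(x)$, giving
\begin{align*}
\int dx\, \delta(x'-f(x))g(x)p(x) = g(f^{-1}(x'))\,p(f^{-1}(x'))\,\left|\tfrac{\partial f^{-1}}{\partial x}\right|_{x'}.
\end{align*}
Thus $Kp = p$ reduces to the identity obtained already in equation \eqref{eq:preserving_gp_init}, which, after setting $x' = f(x)$ and using $|\partial f^{-1}/\partial x|_{f(x)} = 1/|\partial f/\partial x|_{x}$, is equivalent to
\begin{align*}
g(f(x))\,p(f(x))\,\left|\tfrac{\partial f}{\partial x}\right|_x = g(x)\,p(x).
\end{align*}

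The heart of the proof is to verify this identity from the explicit formula for $g$. Substituting the definition and peeling off a factor of $p(f(x))$, the left-hand side becomes
\begin{align*}
\infimum_{k\in\mathbb{Z}}\, p(f^{k+1}(x))\,\left|\tfrac{\partial f^k}{\partial x}\right|_{f(x)}\left|\tfrac{\partial f}{\partial x}\right|_{x}.
\end{align*}
The chain rule $|\partial f^{k+1}/\partial x|_x = |\partial f^{k}/\partial x|_{f(x)}\cdot|\partial f/\partial x|_{x}$ collapses the two Jacobians into one, and reindexing with the bijection $j = k+1$ on $\mathbb{Z}$ (which preserves the infimum) gives $\infimum_{j\in\mathbb{Z}} p(f^{j}(x))|\partial f^{j}/\partial x|_{x} = p(x)\,g(x)$, exactly the right-hand side. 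Finally, I would note that $g(x)\in[0,1]$: nonnegativity is automatic, and taking $k=0$ in the infimum (where $f^0 = \id$ and the Jacobian is $1$) bounds it above by $1$, so the kernel is indeed a valid probabilistic mixture.

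The main obstacle, and in fact the whole point of the construction, is seeing that the translation $k\mapsto k+1$ is a bijection on $\mathbb{Z}$: this is what makes the infimum over the entire orbit — rather than any individually attained value — the correct invariant. A minor subtlety is that the infimum may equal $0$ on an unbounded orbit, making the kernel trivially correct but degenerate (the chain never moves), which motivates the finite-orbit and truncated-orbit constructions of the later sections. The change of variables implicitly requires $f$ to be a $C^1$-diffeomorphism so that the Jacobian factor is well-defined.
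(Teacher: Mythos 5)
Your argument is correct and follows essentially the same route as the paper's own proof in Appendix A.1: both reduce $Kp=p$ to a pointwise identity along the orbit and then close it by observing that the index shift (your $k\mapsto k+1$, the paper's $k\mapsto k-m$ to cover Corollary~\ref{th:m_step_test} as well) is a bijection on $\mathbb{Z}$ and therefore leaves the infimum unchanged. Your added remarks that the $k=0$ term bounds $g$ above by $1$, that the infimum may degenerate to $0$ on an unbounded orbit, and that the change-of-variables step really needs $f$ to be a $C^1$-diffeomorphism rather than merely ``continuous bijective'' are all sound and slightly tighten the theorem's stated hypotheses.
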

If we would like to accept several points from the same orbit we can substitute $f$ with $f^m$ for any integer $m$ and get the following test.
\begin{corollary}
\label{th:m_step_test}
For the the kernel $k_m(x'\cond x) = \delta(x'-f^m(x)) g_m(x) + \delta(x'-x)(1-g_m(x))$ that satisfies $K_mp=p$, the maximal acceptance probability $g_m(x)$ is
\begin{align}
    g_m(x) =\infimum_{k \in \mathbb{Z}}\bigg\{\frac{p(f^{mk}(x))}{p(x)} \bigg| \frac{\partial f^{mk}}{\partial x}\bigg| \bigg\}.
\end{align}
\end{corollary}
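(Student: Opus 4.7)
The plan is to reduce the corollary directly to Theorem \ref{th:o_test} by a one-line substitution. Set $\tilde f := f^m$. Since $f$ is a continuous bijection, so is $\tilde f$, and its iterates satisfy $\tilde f^k = f^{mk}$ as maps, so by the chain rule $\big|\partial \tilde f^k/\partial x\big| = \big|\partial f^{mk}/\partial x\big|$. The kernel $k_m$ in the corollary is then literally the escaping orbital kernel of $\tilde f$, and Theorem \ref{th:o_test} applied to $\tilde f$ immediately produces the acceptance function
\[
\tilde g(x) = \infimum_{k\in\mathbb{Z}} \bigg\{\frac{p(\tilde f^k(x))}{p(x)} \bigg|\frac{\partial \tilde f^k}{\partial x}\bigg|\bigg\} = \infimum_{k\in\mathbb{Z}} \bigg\{\frac{p(f^{mk}(x))}{p(x)} \bigg|\frac{\partial f^{mk}}{\partial x}\bigg|\bigg\},
\]
which keeps $p$ invariant under $K_m$. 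This matches the expression asserted for $g_m(x)$.

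To justify that this is the \emph{maximal} such $g_m$, I would retrace the few lines leading to Theorem \ref{th:o_test} with $f$ replaced by $\tilde f$: the condition $K_m p = p$ forces the system \eqref{eq:preserving_gp} with $f$ swapped for $f^m$, which says that $g_m(x) p(x)$ must be $f^m$-invariant on the orbit $\orb(x_0) = \{f^{mk}(x_0) : k\in\mathbb{Z}\}$ of $f^m$. The constraint $0\le g_m \le 1$ pointwise, together with the goal of minimizing the probability of staying in place, fixes the normalizing constant to be the greatest lower bound of $p(f^{mk}(x))/q(f^{mk}(x))$ along this orbit, which is precisely the infimum displayed above.

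I do not foresee any genuine obstacle; the only thing that requires a moment of care is to note that the relevant orbit is the $f^m$-orbit $\{f^{mk}(x)\}_{k\in\mathbb{Z}}$, which is generally a proper subset of the full $f$-orbit $\{f^k(x)\}_{k\in\mathbb{Z}}$, so the infimum is taken over multiples of $m$ rather than over all integers. With that observation, the corollary is essentially immediate from Theorem \ref{th:o_test}.
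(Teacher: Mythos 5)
Your reduction of the corollary to Theorem~\ref{th:o_test} via $\tilde f = f^m$ is correct and is exactly how the paper intends the corollary to follow: the kernel $k_m$ is literally the escaping orbital kernel of $\tilde f$, and the infimum over the iterates of $\tilde f$ is the infimum over $\{f^{mk}\}_{k\in\mathbb{Z}}$. Your remark that the $f^m$-orbit is generally a proper subset of the $f$-orbit is also the right caveat, and your treatment of maximality mirrors the heuristic the paper gives (choose the greatest lower bound to minimize the rejection probability), so the arguments coincide.
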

Once we have designed the acceptance test we can check if it is reversible.
Reversibility may hinder the mixing properties of the kernel and is usually considered as an undesirable property \citep{turitsyn2011irreversible}.
By the following proposition we see that the derived test theoretically allows us to design irreversible kernels.
\begin{proposition}{(Reversibility criterion)\\}
\label{th:reversibility}
Consider the escaping orbital kernel $k(x'\cond x) = \delta(x'-f(x))g(x) + \delta(x'-x)(1-g(x))$ that satisfies $Kp = p$, and $g(x) > 0$.
This kernel is reversible w.r.t. $p$, i.e. $k(x'\cond x)p(x) = k(x\cond x')p(x')$, if and only if $f$ is an involution, i.e., $f(x) = f^{-1}(x)$.
\end{proposition}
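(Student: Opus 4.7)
My strategy is to unpack the reversibility condition $k(x'\cond x)p(x)=k(x\cond x')p(x')$ as an equality of distributions on $\RR^n\times\RR^n$ and separate its diagonal and off-diagonal contributions. The stay-put terms $\delta(x'-x)(1-g(x))p(x)$ are automatically symmetric under $x\leftrightarrow x'$, since the delta function pins everything to the diagonal where $g$ and $p$ agree. Consequently the full content of reversibility is the single distributional identity
\begin{equation*}
\delta(x'-f(x))\,g(x)\,p(x) \;=\; \delta(x-f(x'))\,g(x')\,p(x').
\end{equation*}

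For the ``only if'' direction I would compare supports. The left-hand side is supported on the graph $\Gamma_f=\{(x,f(x))\}$, the right-hand side on $\Gamma_{f^{-1}}=\{(x,f^{-1}(x))\}$. Because $g>0$ and $p>0$, both singular measures carry nonzero mass in every neighborhood of their respective graphs, so equality as distributions forces $\Gamma_f=\Gamma_{f^{-1}}$. Concretely, if $f(x_0)\neq f^{-1}(x_0)$ for some $x_0$, then by continuity of $f$ one can choose a compactly supported bump function $\varphi$ concentrated near $(x_0,f(x_0))$ and disjoint from $\Gamma_{f^{-1}}$; testing the identity against $\varphi$ gives a nonzero left side and a zero right side. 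Hence $f(x)=f^{-1}(x)$ everywhere, i.e.\ $f$ is an involution.

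For the ``if'' direction, assume $f=f^{-1}$. Then every orbit of $f$ has at most two elements and the infimum defining $g$ collapses to $g(x)=\min\{1,(p(f(x))/p(x))|\partial f/\partial x|\}$, so the escaping orbital kernel reduces to the iMCMC kernel \eqref{eq:imcmc_kernel}, which is known to be reversible. Alternatively one can verify reversibility directly by rewriting $\delta(x-f(x'))=\delta(x'-f^{-1}(x))|\partial f^{-1}/\partial x|_x=\delta(x'-f(x))|\partial f^{-1}/\partial x|_x$ and reducing the off-diagonal identity, on the common support $x'=f(x)$, to equation \eqref{eq:preserving_gp_init} (which the kernel satisfies by construction), combined with the chain-rule relation $|\partial f^{-1}/\partial x|_x\cdot|\partial f^{-1}/\partial x|_{f(x)}=1$ that follows from differentiating $f\circ f=\id$.

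The main obstacle is the support argument in the forward direction: one must show that equality of two singular measures living on differently parameterized graphs really forces the graphs to coincide pointwise, not merely almost everywhere. The hypotheses that $f$ is continuous and that $g>0$ are both essential here, since they preclude the degenerate situation in which the left-hand mass simply vanishes on an open set where $f$ disagrees with $f^{-1}$.
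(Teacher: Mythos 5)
Your proposal is correct and mirrors the paper's own argument in Appendix A.3.1: the paper likewise compares the kernel $k_m(\cdot\cond x)$ with the reverse kernel $r_m(\cdot\cond x)=k_m(x\cond\cdot)\,p(\cdot)/p(x)$ by integrating over small balls around $x'=x$ and $x'=f^{-m}(x)$, which is exactly the localization-of-singular-supports idea you phrase via test functions on $(x,x')$. The only thing worth noting is that the support argument in the ``only if'' direction uses $p>0$ in addition to the stated hypothesis $g>0$; the paper shares this implicit assumption (it divides by $p$ to form the reverse kernel), so it is not a gap, but you should state it.
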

\begin{proof}
See Appendix \ref{app:o_kernel_reversiblity}.
\end{proof}


The bottleneck of the proposed practical scheme is the evaluation of the test. 
Indeed, it requires the evaluation of the infimum over the whole orbit, and if we don't know its analytical formula its computation becomes infeasible. 
However, as we demonstrate further, some types of orbits may simplify this evaluation.

The orbit $\orb(x_0)$ is \textit{periodic} if there is an integer $T > 0$ such that for any $x \in \orb(x_0)$, and for any $k \in \mathbb{Z}$, we have $f^{k+T}(x) = f^k(x)$. The
\textit{period} of the orbit $\orb(x_0)$ is the minimal $T > 0$ satisfying $f^{k+T}(x) = f^k(x)$.
Thus, the orbit $\orb(x_0)$ with the period $T$ can be represented as a finite set of points: $\orb(x_0) = \{f^{0}(x_0), f^{1}(x), \ldots, f^{T-1}(x)\}$, and the acceptance test for periodic orbits can be reduced to the minimum over this finite set as follows.
\begin{proposition}{(Periodic orbit)\\}
For periodic orbit $\orb(x)$ with period $T$, the acceptance test in the kernel \eqref{eq:o_kernel} (from Theorem \ref{th:o_test}) becomes
\begin{align}
    g(x) = \infimum_{k \in \mathbb{Z}}\bigg\{\frac{p(f^k(x))}{p(x)} \bigg| \frac{\partial f^k}{\partial x}\bigg| \bigg\} = \min_{k = 0,\ldots,T-1}\bigg\{\frac{p(f^k(x))}{p(x)} \bigg| \frac{\partial f^k}{\partial x}\bigg| \bigg\}.
    \label{eq:periodic_test}
\end{align}
\end{proposition}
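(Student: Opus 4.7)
The plan is to show that the function $h : \mathbb{Z} \to \mathbb{R}_{\geq 0}$ defined by
\[
h(k) \;:=\; \frac{p(f^k(x))}{p(x)} \,\Bigl|\tfrac{\partial f^k}{\partial x}\Bigr|
\]
is $T$-periodic. Once that is established, $h$ takes at most $T$ distinct values, namely those at $k = 0, 1, \ldots, T-1$, and the infimum of a periodic function over $\mathbb{Z}$ coincides with its minimum over any one period, which is exactly the desired equality.

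To prove $h(k+T) = h(k)$ I would split it into the density factor and the Jacobian factor. The density factor is immediate: the periodic-orbit hypothesis gives $f^{k+T}(x) = f^k(x)$, hence $p(f^{k+T}(x)) = p(f^k(x))$. For the Jacobian factor I would apply the chain rule to $f^{k+T} = f^T \circ f^k$:
\[
\Bigl|\tfrac{\partial f^{k+T}}{\partial x}\Bigr| \;=\; \Bigl|\tfrac{\partial f^T}{\partial y}\Bigr|_{y = f^k(x)} \cdot \Bigl|\tfrac{\partial f^k}{\partial x}\Bigr|,
\]
so the whole argument boils down to showing that $\bigl|\partial f^T / \partial y\bigr|_{y} = 1$ for every $y \in \orb(x_0)$.

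This Jacobian-along-the-return-map condition is the delicate point; it is not a formal consequence of the orbit being $T$-periodic (the fixed point of $f(x) = 2x$ is a period-$1$ orbit whose Jacobian is $2$, not $1$). However, it is forced by the very invariance relation \eqref{eq:q-orbit} that underlies the construction of $g$. Applied with base point $y \in \orb(x_0)$ and exponent $T$, and using $f^T(y) = y$, it reads $q(y) = q(y)\,\bigl|\partial f^T/\partial x\bigr|_y$, so $\bigl|\partial f^T/\partial x\bigr|_y = 1$ whenever $q(y) > 0$, i.e., on every orbit carrying a nontrivial invariant measure (the only case in which the kernel is nontrivially defined). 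Plugging this back into the chain-rule identity yields $h(k+T) = h(k)$, and the reduction of the infimum to a minimum over $\{0, 1, \ldots, T-1\}$ follows.

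The main obstacle is precisely the implicit Jacobian constraint: the statement asserts the finite-minimum reduction without qualification, but it rests on the ``return-map'' Jacobian being unity on the orbit. I would make this dependency explicit by citing \eqref{eq:q-orbit} (or equivalently \eqref{eq:preserving_gp} at $k = T$), since that invariance condition is exactly the non-degeneracy already built into the derivation of the orbital test and is not a separate assumption.
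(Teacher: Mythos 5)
Your proof plan is correct, and—more usefully—it exposes a genuine hypothesis that the paper leaves implicit. Your reduction to showing that
$h(k) = \frac{p(f^k(x))}{p(x)}\big|\partial f^k/\partial x\big|$
is $T$-periodic in $k$, followed by the elementary observation that the infimum of a $T$-periodic sequence equals its minimum over one period, is exactly the right structure. Your split into the density factor (immediate from $f^{k+T}(x)=f^k(x)$) and the Jacobian factor (where the chain rule forces you to show $\big|\partial f^T/\partial y\big|_{y} = 1$ on the orbit) is also correct.

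The crucial point you raise is real: the definition of a periodic orbit in the paper constrains $f^T$ only on the (finite) orbit itself, not in a neighbourhood, so it places no constraint whatsoever on the multiplier $\big|\partial f^T/\partial x\big|$. Your fixed-point example $f(x)=2x$ makes this vivid. If that multiplier were $a\neq 1$, then $\big|\partial f^{nT}/\partial x\big|_{x_0} = a^n$, and one of the two tails $n\to\pm\infty$ drives the infimum to $0$ while the finite minimum over $\{0,\ldots,T-1\}$ stays positive, so the claimed equality would fail. You are also right that \eqref{eq:q-orbit} evaluated at $k=T$ forces $\big|\partial f^T/\partial x\big|_{x_0}=1$ when $q(x_0)>0$, and that this is exactly the invariance used in the paper's own derivation of the test preceding Theorem~\ref{th:o_test}, so it is a natural hypothesis to carry along. (Equivalently, in the practical construction $\widehat f$ in Section~\ref{sec:orbital_kernel}, one has $\widehat f^{\,T}=\id$ globally, which trivially gives unit Jacobian.)

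The paper offers no separate proof of this proposition, treating it as an immediate consequence of finiteness of the orbit; your write-up is therefore not a ``different route'' so much as a more careful version of the intended argument, and it correctly flags that the statement, as literally written with only the hypotheses of Theorem~\ref{th:o_test} plus ``periodic orbit,'' needs the additional condition $\big|\partial f^T/\partial x\big|=1$ on the orbit (guaranteed by \eqref{eq:q-orbit}, or by $f^T=\id$). This is exactly the kind of dependency worth making explicit, and your proof is correct once that hypothesis is added.
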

For periodic orbits with $T=2$ (for instance, orbits of involutions), this test immediately yields the Metropolis-Hastings-Green test (or the iMCMC kernel \eqref{eq:imcmc_kernel}):
\begin{align*}
    \min_{k = 0,1}\bigg\{\frac{p(f^k(x))}{p(x)} \bigg| \frac{\partial f^k}{\partial x}\bigg| \bigg\} = \min\bigg\{1,\frac{p(f(x))}{p(x)} \bigg| \frac{\partial f}{\partial x}\bigg| \bigg\}.
\end{align*}
By considering other lower bounds in the inequalities \eqref{eq:g_prob_condition} for periodic orbits with $T=2$, one can derive other conventional tests, for instance, the tests induced by Barker's lemma \citep{barker1965monte}. Another special case appears when the points of an orbit come arbitrarily close to the initial point, which we consider in Appendix \ref{app:returning_orbit}.

Finally, we need to be able to jump between orbits to be able to cover the state space densely. We can achieve it in the same way as in Algorithm \ref{alg:imcmc}, i.e. we can introduce auxiliary variables that allow to jump to another orbit by simply resampling them.

\subsection{Accepting the whole orbit}
\label{sec:whole_orbit}

Even for periodic orbits the acceptance test \eqref{eq:periodic_test} might be inefficient because we need to evaluate the minimum across the whole orbit (and the density is not preserved).
However, the evaluation of the target density over the whole orbit allows for accepting several samples from the orbit at once.
The naive way to accept several points is to consider a linear combination of the escaping orbital kernels starting from the same point.
We discuss it in details in Appendix \ref{app:linear_combination}.

Another way to make use of the evaluated densities over the whole orbit is as follows.
Instead of switching to another orbit after the accept/reject step, we may let the kernel continue on the same orbit, and thus collect more samples.
Once again, we consider the kernel $k(x'\cond x) = \delta(x'-f(x))g(x) + \delta(x'-x)(1-g(x))$ that preserves target measure $Kp = p$. Starting from the delta-function at some initial point $p_0(x) = \delta(x-x_0)$, the chain iterates by applying the corresponding operator:
\begin{align}
    \begin{split}
    [Kp_0](x') = \int dx\; k(x'\cond x) p_0(x) = (1-g(x_0))\delta(x'-x_0) + g(x_0)\delta(x'-f(x_0)).
    \end{split}
\end{align}
Denoting the recurrence relation as $p_{t+1} = Kp_{t}$, we obtain the sum of weighted delta-functions along the orbit $\orb(x_0)$ of $f$:
\begin{align}
    p_t(x) = [K^t p_{0}](x) = \sum_{i=0}^{t} \omega_i^{t}\delta(x-f^i(x_0)),
    \label{eq:recurrence_p_t}
\end{align}
where $\omega_i^{t}$ is the weight of the delta-function at $f^i(x_0)$ after $t$ steps.
Thus, instead of considering the operator $K$ on the space of functions we consider it on the space of sequences, and  analyse the limit of the series $1/n \sum_{i=0}^{n-1}K^i$, as in the mean ergodic theorem.
The result of our analysis is as follows.

\begin{theorem}{(Convergence on a single orbit)\\}
\label{th:limiting_case_escaping}
Consider the proper escaping orbital kernel ($Kp = p$, and $g(x) > 0$) applied iteratively to $p_0(x) = \delta(x-x_0)$.
For aperiodic orbits, iterations yield
\begin{align}
    [K^t p_{0}](x) = \sum_{i=0}^{t} \omega_i^{t}\delta(x-f^i(x_0)), \;\;
    \lim_{t\to \infty} \sum_{t'=0}^t \omega_i^{t'} = \frac{1}{g(f^i(x_0))},\;\; \lim_{t\to \infty} \frac{1}{t}\sum_{t'=0}^t \omega_i^{t'} = 0.
    \label{eq:aperiodic_weights}
\end{align}
For periodic orbits with period $T$, the time average is
\begin{align}
    \lim_{t\to \infty} \frac{1}{t}\sum_{i=0}^{t-1}K^i p_0 = \sum_{i=0}^{T-1} \omega_i \delta(x-f^i(x_0)), \;\;
     \omega_i = \frac{p(f^i(x_0))\big|\frac{\partial f^{i}}{\partial x}\big|_{x=x_0}}{\sum_{j=0}^{T-1}p(f^j(x_0))\big|\frac{\partial f^{j}}{\partial x}\big|_{x=x_0}}.
    \label{eq:periodic_weights}
\end{align}
\end{theorem}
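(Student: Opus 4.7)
The starting point is the observation that since the kernel $K$ acts on a point mass $\delta(x-f^i(x_0))$ by sending it to $(1-g_i)\delta(x-f^i(x_0)) + g_i\,\delta(x-f^{i+1}(x_0))$ where $g_i := g(f^i(x_0))$, the ansatz \eqref{eq:recurrence_p_t} is preserved at every step and the weights satisfy the one-sided recurrence
\begin{align*}
\omega_i^{t+1} = (1-g_i)\,\omega_i^{t} + g_{i-1}\,\omega_{i-1}^{t},
\qquad \omega_{-1}^{t}\equiv 0,\;\; \omega_i^{0}=\mathbbm{1}[i=0].
\end{align*}
My whole plan is to read off both halves of the theorem from this single recurrence. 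A key algebraic input will be the identity already established in \eqref{eq:preserving_gp}, which specializes on the orbit to
\begin{align*}
g_0\,p(x_0) \;=\; g_i\,p(f^i(x_0))\,\bigl|\partial f^i/\partial x\bigr|_{x=x_0}
\qquad (k\in\mathbb{Z}),
\end{align*}
i.e.\ $g_i$ times the ``natural orbit mass'' is constant in $i$.

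\textbf{Aperiodic case.} I would define $S_i(t) := \sum_{t'=0}^{t}\omega_i^{t'}$ and sum the recurrence to obtain $S_i(t+1) = (1-g_i)\,S_i(t) + g_{i-1}\,S_{i-1}(t) + \mathbbm{1}[i=0]$. A straightforward induction on $i$ shows $S_i(t)$ is nondecreasing in $t$ and bounded by $1/g_i$ (the base case is the geometric series $S_0(t) = \sum_{t'=0}^{t}(1-g_0)^{t'}$; the inductive step uses $(1-g_i)/g_i + 1 = 1/g_i$ together with the induction hypothesis $S_{i-1}(t)\le 1/g_{i-1}$). Hence the limit $S_i^\infty := \lim_t S_i(t)$ exists. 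Passing to the limit in the summed recurrence gives $g_i\,S_i^\infty = g_{i-1}\,S_{i-1}^\infty$, and combined with $S_0^\infty = 1/g_0$ this yields $S_i^\infty = 1/g_i$, which is the first limit in \eqref{eq:aperiodic_weights}. Since $S_i(t)$ is bounded, the Ces\`aro average $S_i(t)/t$ tends to $0$, giving the second limit.

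\textbf{Periodic case.} For period $T$ the orbit consists of only $T$ distinct points, and the recurrence becomes a genuine finite-state Markov chain on $\{0,\ldots,T-1\}$ with $P_{i,i}=1-g_i$ and $P_{i,(i+1)\bmod T}=g_i$. Because $g_i\in(0,1)$ the chain is irreducible (the cycle is connected) and aperiodic (self-loops), so the standard ergodic theorem for finite chains gives convergence of Ces\`aro averages to the unique stationary distribution $\pi$. The stationarity equation $\pi_i = (1-g_i)\pi_i + g_{i-1}\pi_{i-1}$ reduces to $g_i\pi_i = g_{i-1}\pi_{i-1}$, i.e.\ $g_i\pi_i$ is $i$-independent. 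Combined with the identity $g_i\,p(f^i(x_0))|\partial f^i/\partial x|_{x=x_0}$ is $i$-independent, we conclude $\pi_i \propto p(f^i(x_0))|\partial f^i/\partial x|_{x=x_0}$, and the normalization is exactly the denominator in \eqref{eq:periodic_weights}.

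The only genuinely delicate point I anticipate is the aperiodic step: justifying termwise passage to the limit in the summed recurrence and then identifying $S_i^\infty$. The monotonicity of $S_i(t)$ together with the a priori upper bound $1/g_i$ (proved inductively in $i$ before taking the limit in $t$) handles both concerns simultaneously, so once the recurrence and the orbit-wise identity for $g_i$ are in place the rest is bookkeeping.
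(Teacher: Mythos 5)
Your proof is correct and shares the same skeleton as the paper's: both start from the one-sided weight recurrence $\omega_i^{t+1} = (1-g_i)\omega_i^t + g_{i-1}\omega_{i-1}^t$, sum it to obtain the recurrence for $S_i^t := \sum_{t'\le t}\omega_i^{t'}$, and then extract the two limits. Where you diverge is in how the convergence of $S_i^t$ is justified, and your route is genuinely cleaner. For the aperiodic half, the paper solves the summed recurrence explicitly via variation of constants and then reasons about the limit of the resulting series, which requires a delicate splitting argument; you instead note that $S_i(t)$ is nondecreasing in $t$ and prove the uniform bound $S_i(t)\le 1/g_i$ by a short induction in $i$, after which the monotone limit exists and the chained identity $g_i S_i^\infty = g_{i-1}S_{i-1}^\infty$ falls out of passing to the limit in the recurrence. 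This is more elementary and avoids the variation-of-constants bookkeeping entirely. For the periodic half, the paper establishes convergence of $S_i^t/t$ by a hand-crafted domination argument and then characterizes the fixed point; you recognize the weight recurrence as the distributional evolution of a finite-state Markov chain on $\{0,\dots,T-1\}$ and invoke the Ces\`aro ergodic theorem, then solve the balance equation $g_i\pi_i=g_{i-1}\pi_{i-1}$ against the orbit identity \eqref{eq:preserving_gp}. One small correction: you justify aperiodicity of the chain via self-loops, but the hypothesis is only $g(x)>0$, so some or all $g_i$ may equal $1$, in which case those self-loops are absent. This does not break anything --- the Ces\`aro time-average of an irreducible finite chain converges to the stationary distribution with no aperiodicity assumption needed --- so simply drop the aperiodicity remark and rest the argument on irreducibility alone.
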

\vspace{-15pt}
\begin{proof}
See Appendix \ref{app:limiting_case}.
\end{proof}
\vspace{-10pt}

Thus, for periodic orbits, the time average of every weight converges to a very reasonable value, which is proportional to the target density at this point.
Now, using the weights from \eqref{eq:periodic_weights}, we can accept the whole orbit by properly weighting each sample, and we can verify that by accepting the whole orbit we preserve the target measure by considering the kernel $k(x'\cond x) = \sum_{i=0}^{T-1} \omega_i \delta(x'-f^i(x))$, where $\omega_i$ is given by \eqref{eq:periodic_weights}.
The practical value of this procedure comes from the fact that the cost of weight evaluations is equivalent to the cost of the test in Theorem \ref{th:o_test}.

The kernel behaves differently for aperiodic orbits.
Indeed, starting from the point $x_0$, the kernel always moves the probability mass further along the orbit escaping any given point on the orbit since there is no mass flowing backward (the weight of $f^{-1}(x_0)$ is zero from the beginning).
Intuitively, we can think that the weights $\omega_i^t$ from \eqref{eq:recurrence_p_t} evolve in time as a wave packet moving on the real line. 
With this intuition it becomes evident that the time average at every single point on the orbit converges to zero since the mass always escapes this point.
Therefore, if we want to accept several points from the orbit, we need to wait until the kernel ``leaves'' this set of points and then stop the procedure.
We derived a formula for this situation, which turns out to be a deterministic version of SNIS \citep{andrieu2003introduction}, and provide a simple example of this procedure in Appendix \ref{app:oSNIS}.

\subsection{Practical algorithm}

Based on the previous derivations, we propose a practical sampling scheme.
Starting from some initial state $x_0$ we would like to traverse the orbit $\orb(x_0)$ and accept all the states $\{f^i(x_0)\}_{i \in \mathbb{Z}}$ with the weights $\omega_i$ derived in Proposition \ref{th:limiting_case_escaping}.
To guarantee the unbiasedness of the procedure, we design a deterministic function with periodic orbits based on $f$.

Given a function $f$, one can easily construct a periodic function by introducing the auxiliary discrete variable $d \in \{0,\ldots,T-1\}$ (direction), which decides how we apply $f$ to the current state $x$.
That is, we design $\widehat{f}(x,d)$ on the extended space of tuples $[x,d]$ with orbits of period $T$ as follows.
\begin{align*}
    \widehat{f}(x,d) = \begin{cases}
    [f(x), (d+1) \text{ mod } T], \text{ if } d < T-1 \\
    [f^{-(T-1)}, (d+1) \text{ mod } T], \text{ if } d = T-1
    \end{cases}
\end{align*}
Indeed, starting from any pair $[x,d]$ and iteratively applying $\widehat{f}(x,d)$, we have
\begin{align}
    \widehat{f}^T(x,d) = \widehat{f}^{d+1}\bigg([f^{T-1-d}(x), T-1]\bigg) = \widehat{f}^{d}\bigg([f^{-d}(x), 0]\bigg) = [f^{d}(f^{-d}(x)), d] = [x,d].
\end{align}

\begin{wrapfigure}{r}{0.5\textwidth}
\begin{minipage}[H]{0.5\textwidth}
\vspace{-25pt}
\begin{algorithm}[H]
  \caption{Orbital MCMC (periodic)}
  \begin{algorithmic}  
    \INPUT{target density $p(x)$}
    \INPUT{density $p(v\cond x)$ and a sampler from $p(v\cond x)$} 
    \INPUT{continuous $f(x,v)$}
    \FOR{$N$ iterations}
        \STATE sample $v \sim p(v \cond x)$
        \STATE collect orbit $\{[x_j,v_j,d_j] = \widehat{f}^j(x,v,d)\}_{j=0}^{T-1}$
        \STATE $\omega_i \gets p(f^i(x,v)) \big|\partial f^i/ \partial [x,v]\big|$
        \STATE $\omega_i \gets \omega_i / \sum_j \omega_j$
        \STATE $[x,d] \gets [x_j,d_j]$ with probability $\omega_j$
        \STATE $\text{samples} \gets \text{samples} \cup \{(\omega_i,x_i)\}_{i=0}^{T-1}$
    \ENDFOR
    \OUTPUT{samples}
  \end{algorithmic}
  \label{alg:omcmc}
\end{algorithm}
\vspace{-30pt}
\end{minipage}
\end{wrapfigure}
To switch between orbits of $f(x)$ we still need another auxiliary variable to be able to cover the state space densely.
As in Algorithm \ref{alg:imcmc}, we introduce the auxiliary variable $v$ assuming that we can easily sample $v \sim p(v\cond x,d)$ and evaluate the joint density $p(x,v,d)$.
Thus, on the switching step, we sample a single point $[x_j,d_j]$ from the orbit interpreting the weights $\omega_j$ as probabilities, and then resample the auxiliary variable $v$.
For simplicity, we consider $p(d) = \text{Uniform} \{0,\ldots,T-1\}$, and $p(x,v,d) = p(x,v)p(d)$.
Gathering all of the steps we get Algorithm \ref{alg:omcmc}.

Note that the directional variable forces the chain to perform both ``forward'' and ``backward'' moves, which can be considered as a limitation of this algorithm since it may increase the autocorrelation.
To alleviate this effect we make the chain irreversible by shifting the direction by $T/2$ after each iteration.
In practice, this update performs better than uniform sampling of $d$.

\section{Diffusing orbital kernel}
\label{sec:diffusing_orbital_kernel}
We start this section by illustrating the limits of the escaping orbital kernel through a simple example.
Consider the map $f(x) = x + 1$ in $\mathbb{R}$, which has only infinite orbits.
Then the test of the escaping kernel $g(x) = \infimum_{k\in \mathbb{Z}} \{p(f^k(x))/p(x)\big|\partial f^k/ \partial x\big|\} = \infimum_{k\in \mathbb{Z}} \{p(x + k)/p(x)\} = 0$ for any target density $p(x)$. 
Hence, we can't use the map $f(x) = x + 1$ to sample even a single point from the target distribution because the density goes to zero faster than the Jacobian.
At the same time, if we just try to use the formula \eqref{eq:periodic_weights} saying that the period of the orbit is infinite, we end up with a valid kernel.
That is, $k(x'\cond x) = \sum_{i=-\infty}^{+\infty}\omega_i\delta(x'-f^i(x))$, where $\omega_i = p(f^i(x))/\sum_{j=-\infty}^{+\infty} p(f^j(x))$. 
Inserting this kernel into $Kp = p$, one can make sure that this kernel indeed keeps the target measure invariant.
This example motivates another kernel, which is able to traverse infinite trajectories:
\begin{align}
    k(x'\cond x) =& \delta(x'-f(x))g^+(x) + \delta(x'-f^{-1}(x))g^-(x)  +\delta(x-x')(1-g^+(x)-g^-(x)).
    \label{eq:diffusive_kernel}
\end{align}
Writing the condition $Kp=p$ for this kernel, we derive two possible solutions (see Appendix \ref{app:diffusing_kernel}).
One of the solutions is a linear combination of escaping orbital kernels that we have already mentioned before (Appendix \ref{app:linear_combination}).
Another solution yields a new acceptance test:
\begin{align}
    g^+(x) = p(f(x))\bigg|\frac{\partial f}{\partial x}\bigg| c(x), \;\;\;  g^-(x) = p(f^{-1}(x))\bigg|\frac{\partial f^{-1}}{\partial x}\bigg| c(x),
    \label{eq:diffusive_tests}
\end{align}
where $c(x)$ may be chosen as
\begin{align}
     c = \frac{1}{2}\infimum_{k \in \mathbb{Z}}\bigg\{\frac{1}{p(f^k(x))\big|\frac{\partial f^{k}}{\partial x}\big|}\bigg\}, \;\text{ or }\; 
     c = \infimum_{k \in \mathbb{Z}}\bigg\{\frac{1}{p(f^{k+1}(x))\big|\frac{\partial f^{k+1}}{\partial x}\big|+p(f^{k-1}(x))\big|\frac{\partial f^{k-1}}{\partial x}\big|}\bigg\}\notag
\end{align}
the second one is optimal in the sense of the minimum rejection probability, but we further proceed with the first one for simplicity.
Note that taking $f$ as an involution, the kernel \eqref{eq:diffusive_kernel} becomes equivalent to the iMCMC kernel \eqref{eq:imcmc_kernel}.
In Appendix \ref{app:reversibility_diffusing}, we prove that the diffusing orbital kernel is always reversible.

\begin{proposition}{(Reversibility)\\}
The diffusing orbital kernel \eqref{eq:diffusive_kernel} with test \eqref{eq:diffusive_tests} is reversible w.r.t. $p$: $k(x'\cond x)p(x) = k(x\cond x')p(x')$.
\end{proposition}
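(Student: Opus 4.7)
My plan is to verify the detailed balance equation $k(x'\cond x)p(x) = k(x\cond x')p(x')$ term by term, matching the three delta contributions of the kernel against their counterparts after swapping $x \leftrightarrow x'$. The self-loop contribution $\delta(x-x')(1-g^+(x)-g^-(x))p(x)$ is manifestly symmetric because the delta pins $x=x'$, so the work reduces to showing that the forward term on the left equals the backward term on the right, i.e., that
\begin{equation*}
\delta(x'-f(x))\,g^+(x)\,p(x) = \delta(x-f^{-1}(x'))\,g^-(x')\,p(x'),
\end{equation*}
and symmetrically with $+/-$ interchanged.

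The first step is to rewrite the delta on the left in the same form as the delta on the right using the standard change-of-variables identity $\delta(x'-f(x)) = \delta(x-f^{-1}(x'))\,\bigl|\tfrac{\partial f^{-1}}{\partial x'}\bigr|$. After this substitution, the delta forces $x=f^{-1}(x')$, at which point I substitute the definition $g^+(x) = p(f(x))\bigl|\tfrac{\partial f}{\partial x}\bigr|\,c(x)$ and cancel $\bigl|\tfrac{\partial f}{\partial x}\bigr|_{f^{-1}(x')}\cdot\bigl|\tfrac{\partial f^{-1}}{\partial x'}\bigr| = 1$ by the inverse function theorem. The left-hand side then collapses to
\begin{equation*}
\delta(x-f^{-1}(x'))\,p(x')\,p(f^{-1}(x'))\,c(f^{-1}(x')),
\end{equation*}
while the right-hand side, after plugging in $g^-(x') = p(f^{-1}(x'))\bigl|\tfrac{\partial f^{-1}}{\partial x'}\bigr|\,c(x')$, becomes $\delta(x-f^{-1}(x'))\,p(x')\,p(f^{-1}(x'))\,\bigl|\tfrac{\partial f^{-1}}{\partial x'}\bigr|\,c(x')$.

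Matching these two expressions reduces the whole proposition to the single identity
\begin{equation*}
c(f^{-1}(x')) = \bigl|\tfrac{\partial f^{-1}}{\partial x'}\bigr|\,c(x'),
\end{equation*}
which is the only nontrivial piece and will be the main obstacle. To establish it for the choice $c(x) = \tfrac{1}{2}\inf_{k\in\mathbb{Z}}\{1/[p(f^k(x))|\partial f^k/\partial x|]\}$, I will reindex the infimum: for $y=f^{-1}(x')$, the chain rule gives $f^k(y) = f^{k-1}(x')$ and $|\partial f^k/\partial y|_y = |\partial f^{k-1}/\partial x'|_{x'}\cdot|\partial f/\partial y|_y$, so pulling the common Jacobian factor out of the infimum and shifting $k\mapsto k-1$ (which is a bijection on $\mathbb{Z}$) reproduces $c(x')$ up to the factor $1/|\partial f/\partial y|_y = |\partial f^{-1}/\partial x'|_{x'}$. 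The analogous identity for the alternative optimal $c$ follows by the same shift argument applied to the denominator. The remaining $-/+$ balance equation follows from the $+/-$ one by swapping the roles of $f$ and $f^{-1}$ (equivalently, $x$ and $x'$), closing the proof.
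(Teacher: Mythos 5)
Your proof is correct and follows essentially the same strategy as the paper's: match the three delta contributions of $k(x'\cond x)p(x)$ and $k(x\cond x')p(x')$ term by term. The only real difference is how the key identity is established. The paper's proof invokes the stationarity condition (derived earlier in Appendix~\ref{app:diffusing_kernel}), namely $g^+(f^{-1}(x))p(f^{-1}(x))\bigl|\partial f^{-1}/\partial x\bigr| = g^-(x)p(x)$, to conclude that the ``reverse kernel'' $r(x'\cond x)=k(x\cond x')p(x')/p(x)$ coincides with $k(x'\cond x)$ when integrated around each delta peak. You instead reduce the same equation to the identity $c(f^{-1}(x')) = \bigl|\partial f^{-1}/\partial x'\bigr|\,c(x')$ and verify it directly from the infimum definition of $c$ via the index shift $k\mapsto k-1$, making your argument self-contained rather than dependent on the prior stationarity derivation. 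Both routes rest on the same underlying fact (shift-invariance of the infimum along the orbit), so this is a presentational rather than substantive difference.
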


To provide the reader with the intuition we again assume that the map $f$ preserves some density $q$ on the orbit $\orb(x_0)$, i.e. $q(x_0) = q(f^i(x_0)) \big|\partial f^i/ \partial x\big|$.
We can then rewrite the test slightly differently:
\begin{align}
    \begin{split}
    g^+(x) = \widehat{g}(f(x)),\; g^-(x) = \widehat{g}(f^{-1}(x)),
    \text{where }\; \widehat{g}(x) = \widehat{c}\frac{p(x)}{q(x)},\;\; \widehat{c} = \frac{1}{2} \inf_k \bigg\{ \frac{q(f^k(x_0))}{p(f^k(x_0))} \bigg\}.
    \end{split}
\end{align}
Firstly, it is now apparent that the diffusing orbital kernel is tightly related to rejection sampling.
Indeed, $\widehat{g}(x)$ accepts samples with a probability proportional to the target density, and the constant $c$ makes this probability smaller than $1$: $\widehat{c} \leq q(x)/(2p(x))$.

Secondly, we can compare this test with the test for the escaping orbital kernel:
\begin{align}
    g(x) = c\cdot\frac{q(x)}{p(x)}, \;\;  c=\infimum_{k \in \mathbb{Z}}\bigg\{\frac{p(f^k(x))}{q(f^k(x))}\bigg\}.
\end{align}
We see that both tests are complementary to each other.
That is, if $p(x)/q(x)$ vanishes then $c = 0$ and we cannot use the escaping orbital kernel; however, at the same time, $\widehat{c}$ does not go to zero and we can use the diffusing orbital kernel.
The same logic applies in the opposite direction when $q(x)/p(x)$ vanishes.
In practice, having a continuous bijection $f$ one can decide between these kernels by estimating $\infimum\{p(x)/q(x)\}$ and $\infimum\{q(x)/p(x)\}$ on the orbit.
Finally, the inversion of the density ratio under the infimum allows the diffusing kernel to converge on aperiodic orbits as stated in the following proposition.
\begin{theorem}{(Convergence on a single orbit)\\}
\label{th:limiting_case_diffusing}
Consider the diffusing orbital kernel (with test \eqref{eq:diffusive_tests}, and $c > 0$), and the initial distribution $p_0(x) = \delta(x-x_0)$.
For aperiodic orbits, if the series $\sum_{j=-\infty}^{+\infty}p(f^j(x_0))\big|\frac{\partial f^{j}}{\partial x}\big|$ converges, we have
\begin{align}
    \lim_{t\to \infty} \frac{1}{t}\sum_{i=0}^{t-1}K^i p_0 = \sum_{i=-\infty}^{+\infty} \omega_i \delta(x-f^i(x_0)), \;\;
    \omega_i = \frac{p(f^i(x_0))\big|\frac{\partial f^{i}}{\partial x}\big|_{x=x_0}}{\sum_{j=-\infty}^{+\infty}p(f^j(x_0))\big|\frac{\partial f^{j}}{\partial x}\big|_{x=x_0}}.
\end{align}
For periodic orbits with period $T$, we have
\begin{align}
    \lim_{t\to \infty} \frac{1}{t}\sum_{i=0}^{t-1}K^i p_0 = \sum_{i=0}^{T-1} \omega_i \delta(x-f^i(x_0)), \;\; 
    \omega_i = \frac{p(f^i(x_0))\big|\frac{\partial f^{i}}{\partial x}\big|_{x=x_0}}{\sum_{j=0}^{T-1}p(f^j(x_0))\big|\frac{\partial f^{j}}{\partial x}\big|_{x=x_0}}.
\end{align}
\end{theorem}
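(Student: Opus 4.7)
The plan is to reduce the statement to an ergodic theorem for a birth--death chain indexed by the orbit. Because the kernel \eqref{eq:diffusive_kernel} started from $\delta(x-x_0)$ keeps its mass on $\orb(x_0)$, every iterate can be written as $[K^t p_0](x) = \sum_i w_i^t\,\delta(x-f^i(x_0))$, and the master equation becomes
$$w_i^{t+1} = w_{i-1}^t\, g^+(f^{i-1}(x_0)) + w_{i+1}^t\, g^-(f^{i+1}(x_0)) + w_i^t\,(1 - g^+(f^i(x_0)) - g^-(f^i(x_0))).$$
This is the forward equation of a birth--death chain on $\mathbb{Z}/T\mathbb{Z}$ in the periodic case and on $\mathbb{Z}$ in the aperiodic case.

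Next I would identify the stationary distribution as $\pi_i := p(f^i(x_0))\,|\partial f^i/\partial x|_{x_0}$ by verifying the detailed-balance relation $\pi_i\, g^+(f^i(x_0)) = \pi_{i+1}\, g^-(f^{i+1}(x_0))$. Expanding $g^\pm$ via \eqref{eq:diffusive_tests} and applying the chain rule $|\partial f^{i+1}/\partial x|_{x_0} = |\partial f/\partial x|_{f^i(x_0)}\,|\partial f^i/\partial x|_{x_0}$ together with the reciprocal identity $|\partial f/\partial x|_{f^i(x_0)}\,|\partial f^{-1}/\partial y|_{f^{i+1}(x_0)} = 1$ reduces both sides to the same expression. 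Since a stationary measure of a birth--death chain is unique up to normalization and necessarily detailed-balanced, $\pi$ is thereby pinned down, consistent with the already-established reversibility of $K$ with respect to $p$.

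For the periodic case the chain lives on a finite state space and is irreducible (since $c>0$ makes every $g^\pm_i$ positive) and aperiodic (since $g^+_i + g^-_i < 1$ somewhere); the finite-state ergodic theorem then delivers the Cesaro limit with $\omega_i = \pi_i/\sum_{j=0}^{T-1}\pi_j$. For aperiodic orbits, the summability hypothesis $\sum_{j\in\mathbb{Z}} \pi_j < \infty$ is precisely the statement that the reversible measure is normalizable, which certifies positive recurrence of the chain on $\mathbb{Z}$; the ergodic theorem for countable-state Markov chains then yields the same Cesaro convergence, now with $\omega_i = \pi_i/\sum_{j\in\mathbb{Z}}\pi_j$.

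The main obstacle is the aperiodic case: without the summability assumption the chain would at best be null-recurrent, the mass would diffuse to infinity, and the time-averaged weight at every site would vanish (compare the aperiodic part of Theorem \ref{th:limiting_case_escaping}), so the hypothesis is doing essential work. The rest is bookkeeping with the Jacobian chain rule; the only subtlety is to track carefully where each Jacobian factor is evaluated, since the symmetry of birth and death rates depends on the above cancellations holding exactly rather than up to an extra volume factor.
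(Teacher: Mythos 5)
Your proof is correct, and it takes a genuinely different route than the paper's. The paper works directly with the weight recurrence $\omega_i^{t+1} = (1-g^+-g^-)\omega_i^t + g^+(x_{i-1})\omega_{i-1}^t + g^-(x_{i+1})\omega_{i+1}^t$, sums over time to form $S_i^t = \sum_{t'\le t}\omega_i^{t'}$, solves the resulting inhomogeneous recurrence by the method of constant variation, dominates $S_i^t/t$ by a convergent series to establish convergence, then identifies the limit as the fixed point $A_i \propto g(x_i)$ of the time-averaged recursion and normalizes. You instead recognize the action of $K$ on $\orb(x_0)$ as a reversible birth--death chain, verify detailed balance $\pi_i\,g^+(x_i)=\pi_{i+1}\,g^-(x_{i+1})$ with $\pi_i = p(f^i(x_0))\big|\partial f^i/\partial x\big|_{x_0}$ directly from \eqref{eq:diffusive_tests}, and then invoke the mean (Cesàro) ergodic theorem for irreducible countable-state chains. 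Your Jacobian bookkeeping is right: writing $J_i = \big|\partial f^i/\partial x\big|_{x_0}$ one gets $g^+(x_i)=\pi_{i+1}/(2\sup_m \pi_m)$ and $g^-(x_{i+1})=\pi_i/(2\sup_m\pi_m)$, so both sides of detailed balance equal $\pi_i\pi_{i+1}/(2\sup_m\pi_m)$. Your reading of the summability hypothesis is also the right one: a summable reversible measure is a stationary probability, which for an irreducible chain forces positive recurrence; without it the chain is at best null-recurrent and the Cesàro weight at every site vanishes, matching the aperiodic branch of Theorem~\ref{th:limiting_case_escaping}. The trade-off is that your route is shorter and more conceptual but leans on standard Markov-chain ergodic theory as a black box, while the paper's explicit recurrence analysis is self-contained (it only appeals to ergodic theory at the end, for uniqueness of the limit). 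Two minor remarks: aperiodicity of the chain is not actually needed since the statement is a Cesàro average, not a pointwise limit of $K^t p_0$; and the detailed-balance computation must account for $c(x)$ not being constant along the orbit (it scales as $J_i$), which your chain-rule bookkeeping implicitly handles but is worth flagging explicitly.
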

\vspace{-15pt}
\begin{proof}
See Appendix \ref{app:diffusive_mean_ergodic}.
\end{proof}

\vspace{5pt}
\subsection{Practical algorithm}
\begin{wrapfigure}{r}{0.5\textwidth}
\begin{minipage}[H]{0.5\textwidth}
\vspace{-25pt}
\begin{algorithm}[H]
  \caption{Orbital MCMC (contracting)}
  \begin{algorithmic}  
    \INPUT{target density $p(x)$}
    \INPUT{density $p(v\cond x)$ and a sampler from $p(v\cond x)$} 
    \INPUT{continuous $f(x,v)$}
    \INPUT{threshold value $W$}
    \FOR{$N$ iterations}
        \STATE sample $v \sim p(v \cond x)$
        \STATE $\omega_{\text{max}} \gets p(x,v)$
        \WHILE{$\log \omega_i > \log \omega_{\text{max}} - \log W$}
        \STATE $[x_i,v_i] = f^i(x,v)$
        \STATE $\omega_{i} \gets p(f^i(x,v)) \big|\partial f^i/ \partial [x,v]\big|$ 
        \ENDWHILE
        \STATE $\omega_i \gets \omega_i / \sum_j \omega_j$
        \STATE $[x] \gets [x_i]$ with probability $\omega_i$
        \STATE $\text{samples} \gets \text{samples} \cup \{(\omega_i,x_i)\}$
    \ENDFOR
    \OUTPUT{samples}
  \end{algorithmic}
  \label{alg:omcmc_contracting}
\end{algorithm}
\vspace{-30pt}
\end{minipage}
\end{wrapfigure}

Although, the derived kernel allows us to sample using the infinite trajectories, it still puts some restrictions on the choice of the diffeomorphism $f$.
The obvious restriction is that the sum $\sum_{j=-\infty}^{+\infty}p(f^j(x_0))\big|\partial f^{j}/\partial x\big|$ has to converge.
The most straightforward way to make this series convergent is to choose $f$ as a step of an optimization algorithm, maximizing the log-probability of the target distribution: $\log p(x)$.

Indeed, when the kernel goes to minus infinity ($i \to -\infty$) the density vanishes $p(f^i(x_0)) \to 0$ since the optimizer $f$ minimizes the density going backward.
On the other hand, when $i \to +\infty$ the point converges to some local maximum $f^i(x_0) \to x^*$, hence, for proper target distributions, the density converges to some positive constant $p(f^i(x_0)) \to p(x^*)$. However, at the same time, the Jacobian vanishes $\big|\partial f^{i}/\partial x\big|\to 0$ since the optimizer is a contractive map, implying that their product also vanishes.

The convergence of weights allows us to truncate the infinite trajectory in both directions. Thus, we can iterate forward and backward from the initial state until the weights $\omega_i$ become very small. The bias introduced by the truncation can be made arbitrarily small by choosing the threshold value. In practice, we iterate while the ratio (the maximum weight)/(the current weight) is less than $10^3$ (see Algorithm \ref{alg:omcmc_contracting}).
Note that, due to its discrete nature, the optimization algorithm could be non-monotonic in $\omega_i$. This can be alleviated by choosing smaller step sizes and setting higher threshold $W$, what might negatively affect the performance.

Several hyperparameter settings related to the optimization dynamics are possible in this algorithm. To avoid expensive evaluations of the Jacobian, similar to HMC, we consider optimization schemes in the joint space of states $x$ and momenta $v$.
It is then easy to see that, for instance, the Jacobian of SGD with momentum is just a constant.
In practice, we use the Leap-Frog integrator from \citep{francca2020conformal}, which simulates Hamiltonian dynamics with friction (see Appendix \ref{app:oHMC}).

\section[Empirical study]{Empirical study\footnote{The code reproducing all experiments is available at \url{https://github.com/necludov/oMCMC}}}
\label{sec:practice}

To study the Algorithms \ref{alg:omcmc} and \ref{alg:omcmc_contracting}, we consider Hamiltonian dynamics for the deterministic function $f$.
Then the joint density $p(x,v,d)$ is the fully-factorized distribution $p(x,v,d) = p(x)\mathcal{N}(v\cond 0,\mathbbm{1}) p(d)$, where $p(d) = \text{Uniform} \{0,\ldots,T-1\}$.
For Algorithm \ref{alg:omcmc}, the deterministic transition $f$ is the Leap-Frog integrator.
For Algorithm \ref{alg:omcmc_contracting}, we choose the Leap-Frog as well, but with a friction component.
We compare both algorithms to HMC and the recycled HMC \citep{nishimura2020recycling}.
Recycled HMC simulates the whole trajectory using the Leapfrog integrator and then, for each point $f^i(x,v)$ of the trajectory, decides whether to collect the sample or not.

To tune the hyperparameters for all algorithms we use the ChEES criterion \citep{hoffman2021adaptive}. During the initial period of adaptation, this criterion optimizes the maximum trajectory length $T_{\text{max}}$ for the HMC with jitter (trajectory length at each iteration is sampled $\sim \text{Uniform}(0,T_{\text{max}})$). To set the stepsize of HMC we follow the common practice of keeping the acceptance rate around $0.65$ as suggested in \citep{beskos2013optimal}.
We set this stepsize via double averaging as proposed in \citep{hoffman2014no} and considered in ChEES-HMC.
Note that this choice of hyperparameters is designed especially for HMC and doesn't generalize to other algorithms. We leave the study of adaptation procedures for our algorithms as a future work.
For Algorithm \ref{alg:omcmc_contracting} we don't need to set the trajectory length, but we use the step size yielded at the adaptation step of ChEES-HMC. The crucial hyperparameter for this algorithm is the friction coefficient $\beta$, which we set to $\sqrt[n]{0.8}$, where $n$ is the number of dimensions of the target density, thus setting the contraction rate to $0.64$.

\begin{figure*}[t]
    \centering
    \includegraphics[width=0.24\textwidth]{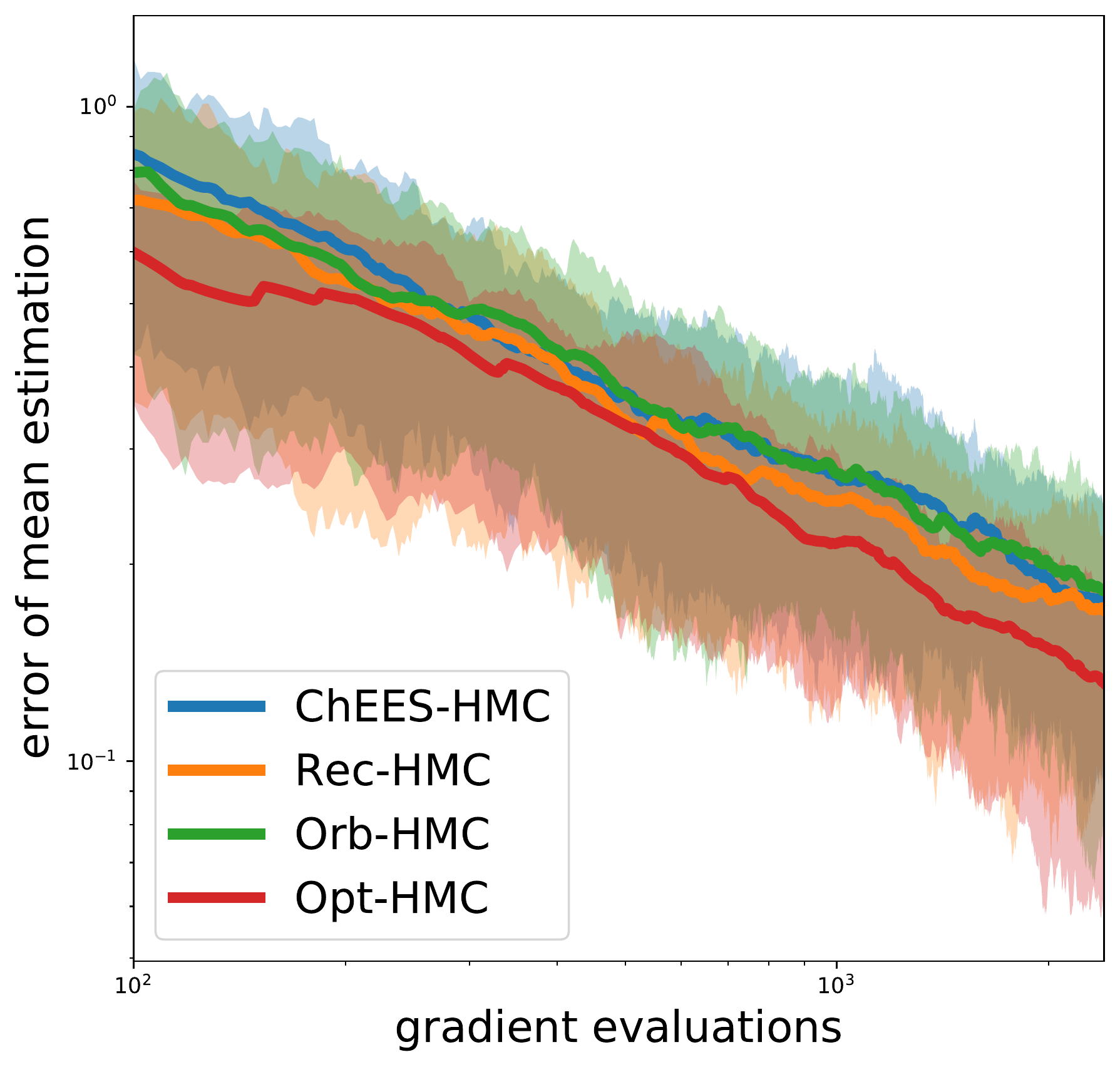}
    \includegraphics[width=0.24\textwidth]{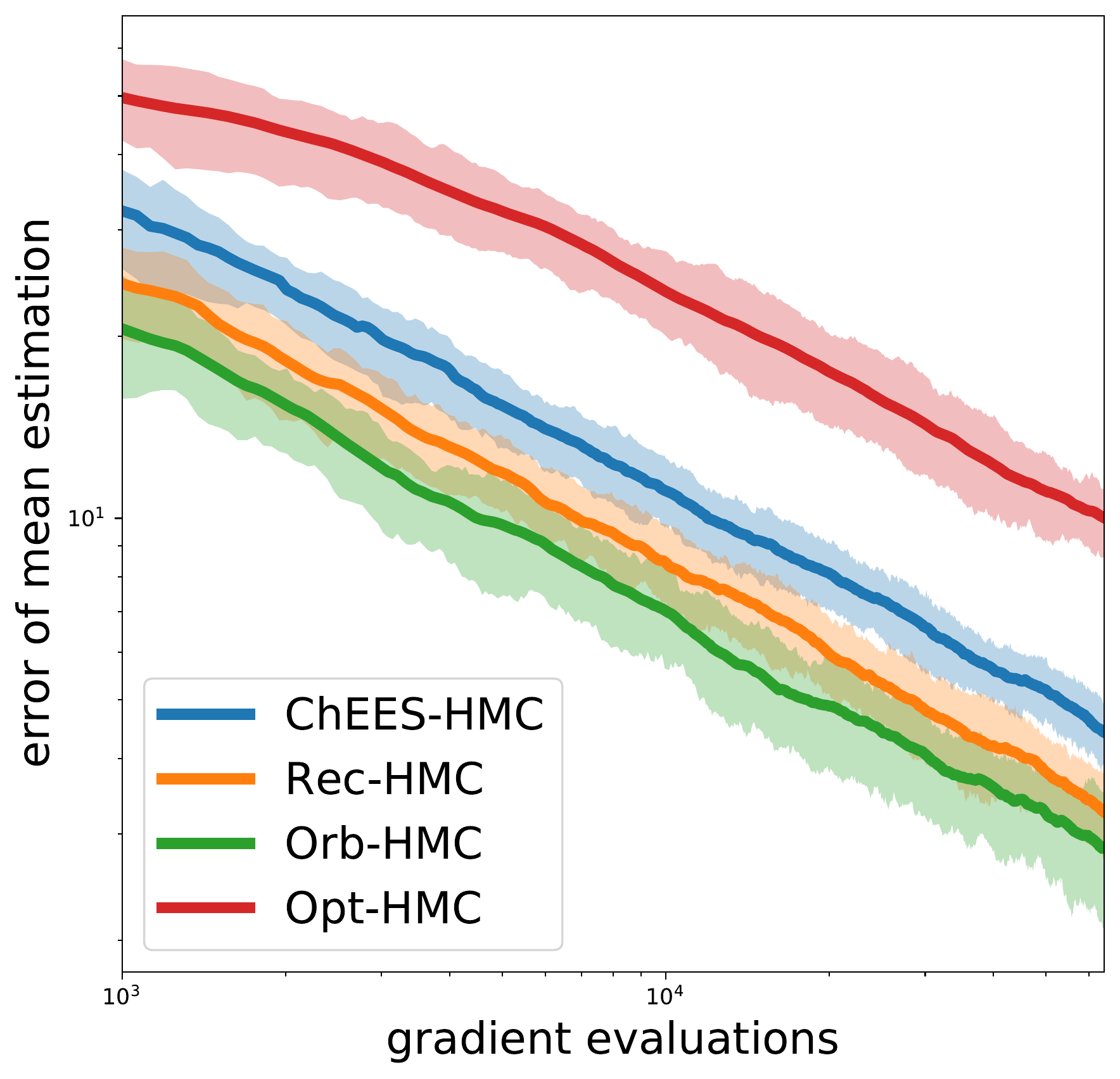}
    \includegraphics[width=0.24\textwidth]{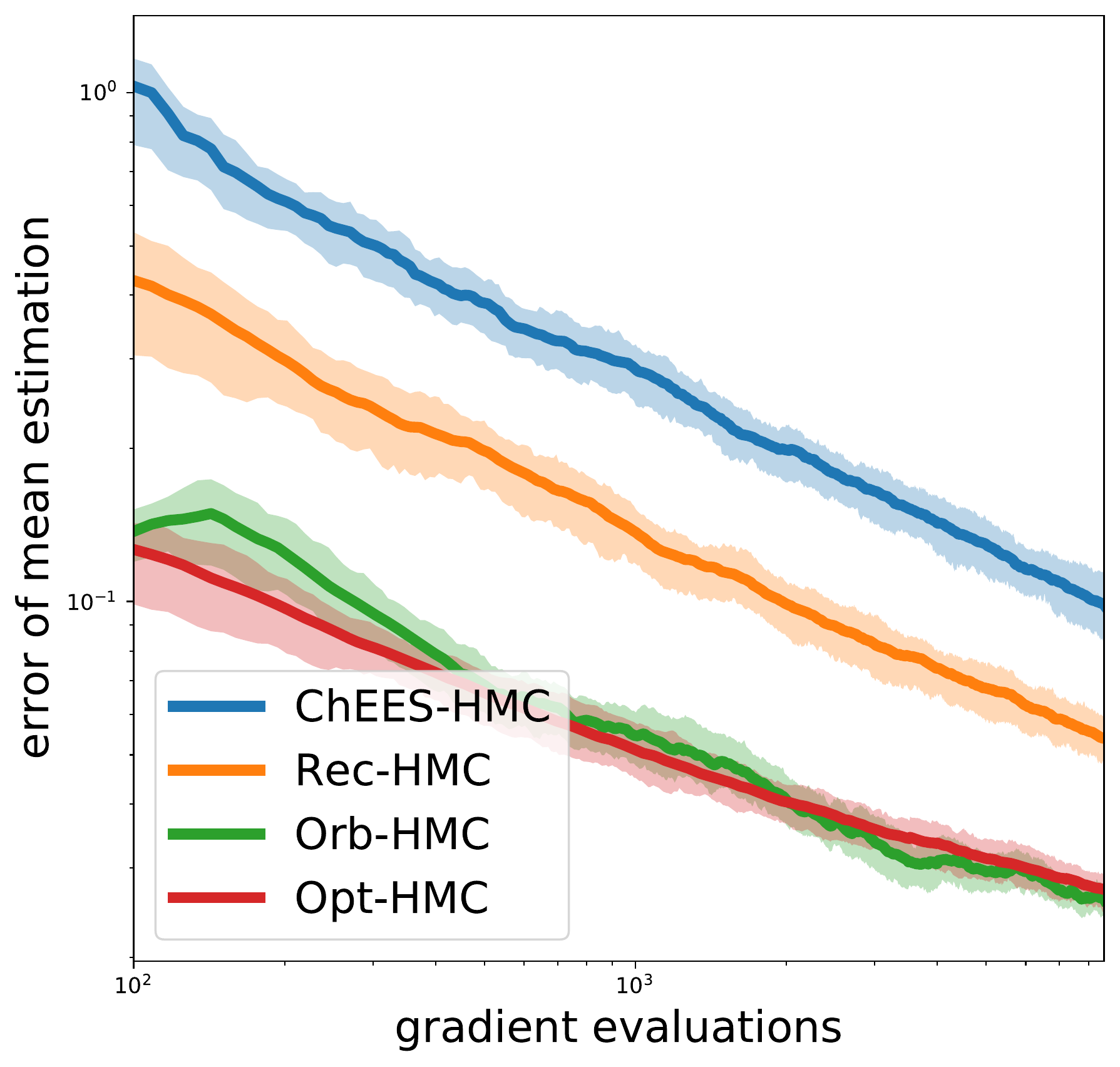}
    \includegraphics[width=0.24\textwidth]{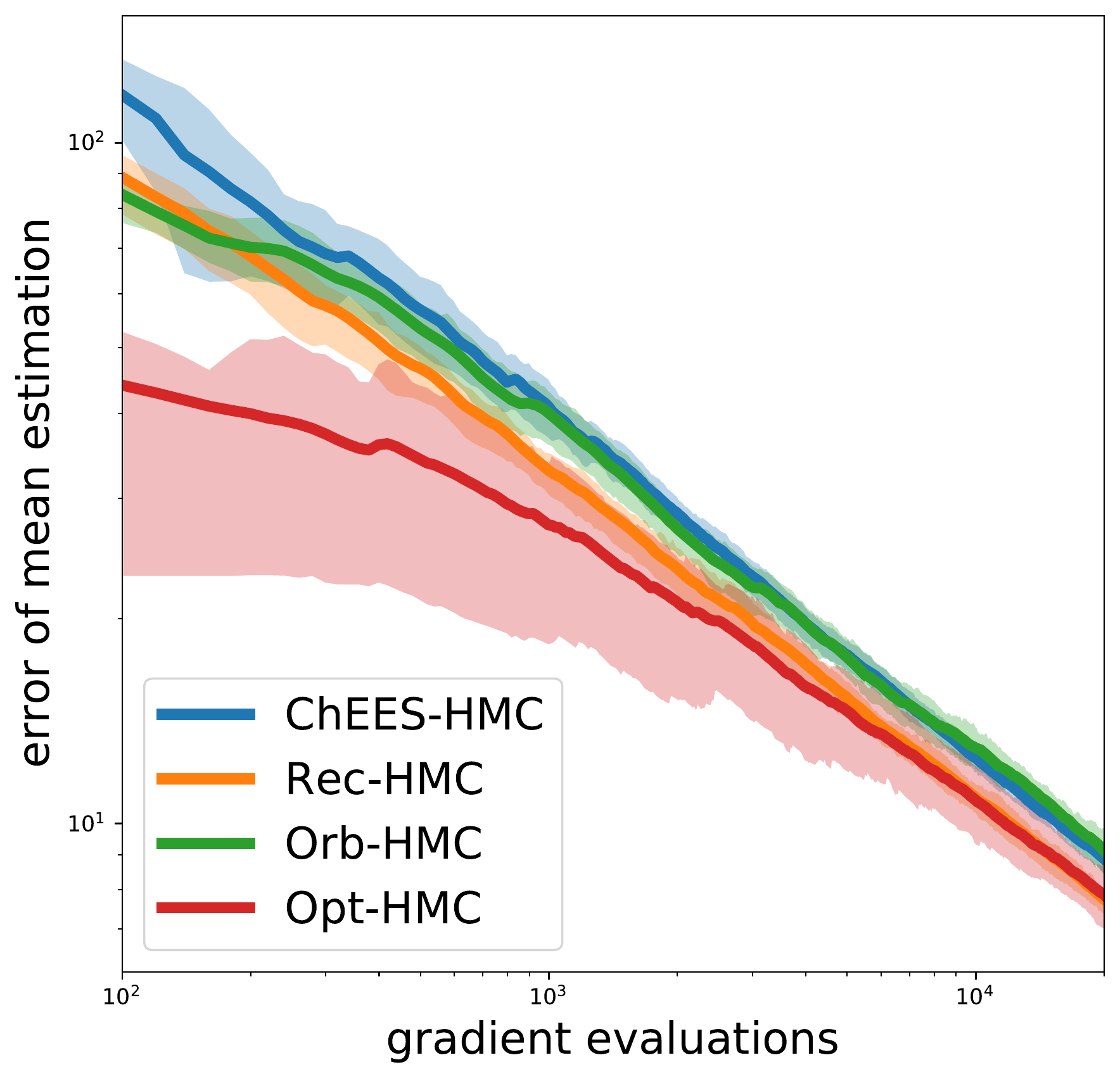}
    \caption{From left to right: the error of mean estimation on Banana, ill-conditioned Gaussian, logistic regression, Item-Response model. Every solid line depicts the mean of the absolute error averaged across $100$ independent chains. The shaded area lies between $0.25$ and $0.75$ quantiles of the error. Orb-HMC corresponds to Algorithm \ref{alg:omcmc} with Hamiltonian dynamics. Opt-HMC corresponds to Algorithm \ref{alg:omcmc_contracting} with Hamiltonian with friction.}
    \label{fig:errors}
\end{figure*}

For the comparison, we take several target distributions: Banana (2-D), ill-conditioned Gaussian (50-D), the posterior distribution of the Bayesian logistic regression (25-D), and the posterior distribution of the Item-Response model (501-D) (see description in Appendix \ref{app:dists}).
For all experiments, we use $1000$ adaptation iterations of ChEES-HMC then followed by $1000$ sampling iterations of ChEES-HMC. 
Then we fix the computational budget in terms of density and gradient evaluations and run all algorithms using approximately the same computational budget.

In Figure \ref{fig:errors}, we compare the errors in the estimation of the mean of the target distribution as a function of the number of gradient evaluations (which we take as a hardware-agnostic estimation of computation efforts).
Algorithm \ref{alg:omcmc_contracting} (Opt-HMC) provides the best estimate for the mean value for all distributions except for the ill-conditioned Gaussian (where Orbital-HMC demonstrates the fastest convergence).
We can understand the low performance of Opt-HMC for the ill-conditioned Gaussian because of its inability to traverse long distances due to the introduced friction.
This property is crucial here since the variances along all dimensions scale logarithmically from $10^{-2}$ to $10^2$ forcing the chain to keep a small step size.
Note that the ChEES criterion was specifically developed for HMC to solve such problems.
Another downside of Opt-HMC is that it relatively poorly estimates the variance of the target (we provide corresponding plots in Appendix \ref{app:oHMC}).

Given the same computational budget, all the algorithms produce different amount of samples. 
However, the evaluation of statistics of interest could be expensive. Therefore, it is important for algorithms to output a limited amount of low-correlated samples.
We validate this property by subsampling the states from the trajectory to yield the same amount of samples as HMC ($10^3$ samples).
We evaluate the Effective Sample Size (ESS) and report the ESS per gradient evaluation (including the adaptation cost into the budget of all algorithms).
The results are provided in Table \ref{tab:ess}.
Our algorithms perform relatively poorly on the Gaussian, but perform comparably or better on other distributions.
For the Item-Response model, Opt-HMC demonstrates the fastest convergence to the mean, but ChEES-HMC outperforms it in terms of ESS.
Thus, the main advantage of the proposed algorithms comes when we are able to use all of the collected samples for the estimation.

\begin{table*}[t]
\caption{Performance of the algorithms as measured by the Effective Sample Size (ESS) per gradient evaluation (higher values are better). For each algorithm we subsample $1000$ points from the trajectory preserving their order. For each of the $100$ independent chains we measure the minimum ESS across dimensions and report the median across chains as well as their standard deviations.}
\label{tab:examples}
\vspace{-1em}
\begin{center}
\resizebox{1.0\textwidth}{!}{
\begin{tabular}{lcccc}
    \toprule
    {Algorithm}&{Banana}&{Gaussian}&{Logistic Reg}&{Item-response}\\
    \midrule
    ChEES-HMC & 
    7.83e–5$\pm$6.95e–5 & 
    \textbf{1.51e–06}$\mathbf{\pm}$\textbf{6.14e–7} &
    1.52e–5$\pm$6.90e–6 & 
    \textbf{2.97e–6}$\mathbf{\pm}$\textbf{1.02e–6} \\
    Recycled-HMC & 
    7.76e–5$\pm$9.86e–5 & 
    1.43e–6$\pm$7.37e–7 & 
    2.08e–5$\pm$9.28e–6 & 
    2.63e–6$\pm$8.96e–7 \\
    Orbital-HMC & 
    7.71e–5$\pm$8.68e–5 & 
    1.08e–6$\pm$7.30e–7 & 
    3.09e–5$\pm$1.30e–5 & 
    1.73e–6$\pm$7.21e–7 \\
    Opt-HMC & 
    \textbf{2.23e–4}$\mathbf{\pm}$\textbf{2.41e–4} & 
    2.04e–7$\pm$8.90e–8 & 
    \textbf{3.66e–5}$\mathbf{\pm}$\textbf{1.82e–5} & 
    2.68e–6$\pm$1.53e–6 \\
    \bottomrule
\end{tabular}}
\end{center}
\label{tab:ess}
\vspace{-10pt}
\end{table*}

\section{Conclusion}
In this paper we have developed two new practical MCMC algorithms based on iterative deterministic maps. We believe that our orbital MCMC framework opens the door to cross-fertilization between (possibly chaotic) dynamical systems theory, optimization, and MCMC algorithm design.
In Appendix \ref{app:neural_models}, we discuss possible applications of the oMCMC framework in the context of neural models.


\bibliography{aistats2020}
\bibliographystyle{icml2021}

\newpage

\appendix

%
%





%

%


\section{Proofs}

\subsection{Orbital test}
\label{app:o_test_proof}

Consider the transition kernel
\begin{align}
    k(x'\cond x) = \delta(x'-f^m(x)) g(x) + \delta(x'-x)(1-g(x)), \;\; g(x) = \infimum_{k \in \mathbb{Z}}\bigg\{\frac{p(f^k(x))}{p(x)} \bigg| \frac{\partial f^k}{\partial x}\bigg| \bigg\}
\end{align}
Substituting this kernel into the stationary condition
\begin{align}
    \int dx \; k(x'\cond x)p(x) = p(x'),
\end{align}
we obtain
\begin{align}
    \int dx\; \delta(x'-f^m(x)) \infimum_{k \in \mathbb{Z}}\bigg\{p(f^k(x)) \bigg| \frac{\partial f^k}{\partial x}\bigg| \bigg\} + \int dx\; \delta(x'-x) \bigg(p(x)- \infimum_{k \in \mathbb{Z}}\bigg\{p(f^k(x)) \bigg| \frac{\partial f^k}{\partial x}\bigg| \bigg\}\bigg) = p(x').
\end{align}
This implies
\begin{align}
     \infimum_{k \in \mathbb{Z}}\bigg\{p(f^{k-m}(x')) \bigg| \frac{\partial f^{k}}{\partial x}\bigg|_{x=f^{-m}(x')} \bigg\} \bigg| \frac{\partial f^{-m}}{\partial x'}\bigg| &= \infimum_{k \in \mathbb{Z}}\bigg\{p(f^k(x')) \bigg| \frac{\partial f^k}{\partial x'}\bigg| \bigg\} \\
     \infimum_{k \in \mathbb{Z}}\bigg\{p(f^{k-m}(x')) \bigg| \frac{\partial f^{k-m}}{\partial x'}\bigg| \bigg\} &= \infimum_{k \in \mathbb{Z}}\bigg\{p(f^k(x')) \bigg| \frac{\partial f^k}{\partial x'}\bigg| \bigg\}.
\end{align}
The last equation holds since the shift of the index does not affect the set of lower bounds.

\subsection{Returning orbit}
\label{app:returning_orbit}

We call orbit $\orb(x_0)$ \textit{returning} if for any $x \in \orb(x_0)$ we have $\infimum_{k>0}\norm{f^k(x)-x} = 0$.
We also can think of this type of orbits as dense orbits on itself.
Note that returning orbits is a wider class than periodic orbits.
For instance, all of the orbits of $f(x) = (x + a) \;\text{mod}\; 1$ in $[0,1]$ are returning for any irrational $a$, but not periodic.

\begin{proposition*}{(Returning orbit)\\}
For continuous target density $p(x)$, consider the kernel \eqref{eq:o_kernel} from Theorem \ref{th:o_test}. 
If the orbit $\orb(x)$ is returning and $f$ preserves a measure with continuous density, then the acceptance test can be written as
\begin{align}
    g(x) = \infimum_{k \in \mathbb{Z}}\bigg\{\frac{p(f^k(x))}{p(x)} \bigg| \frac{\partial f^k}{\partial x}\bigg| \bigg\} = \infimum_{k \geq 0}\bigg\{\frac{p(f^k(x))}{p(x)} \bigg| \frac{\partial f^k}{\partial x}\bigg| \bigg\}.
\end{align}
\end{proposition*}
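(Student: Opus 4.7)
The plan is to reduce the statement to a continuity-and-density claim about a rescaled density ratio, then exploit the returning property via a pigeonhole argument.

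First I would use the preserved density $q$ to rewrite the quantity inside the infimum. Since $f$ preserves $q$ on the orbit, the cocycle identity $q(x) = q(f^k(x))\big|\partial f^k/\partial x\big|$ holds for every $k \in \mathbb{Z}$, so
\begin{align*}
\frac{p(f^k(x))}{p(x)}\bigg|\frac{\partial f^k}{\partial x}\bigg| = \frac{q(x)}{p(x)}\cdot h(f^k(x)), \qquad h(y) := \frac{p(y)}{q(y)}.
\end{align*}
The prefactor $q(x)/p(x)$ is independent of $k$, so it suffices to show $\inf_{k\in\mathbb{Z}} h(f^k(x)) = \inf_{k\geq 0} h(f^k(x))$. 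Note that $h$ is continuous on the orbit because $p$ and $q$ are continuous and, since the Jacobian of the diffeomorphism $f^k$ is finite and non-zero, $q$ cannot vanish at isolated points of the orbit.

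The inequality $\inf_{k\in\mathbb{Z}} h(f^k(x)) \leq \inf_{k\geq 0} h(f^k(x))$ is immediate because the left-hand infimum is over a larger set. For the reverse, I would fix $k<0$ and show that $h(f^k(x))$ is approximable from above by values $h(f^j(x))$ with $j\geq 0$. Set $x' = f^k(x)$. The crucial sub-step is the following sharpened returning property: for any $N>0$, $\infimum_{n>N}\norm{f^n(x')-x'}=0$. To see this, suppose by contradiction that some $\eps>0$ and $N$ satisfied $\norm{f^n(x')-x'}\geq\eps$ for all $n>N$. Then every sequence realising the original infimum $\infimum_{n>0}\norm{f^n(x')-x'}=0$ would eventually lie in $\{1,\ldots,N\}$, a finite set, so by pigeonhole some $n_*\leq N$ would satisfy $f^{n_*}(x')=x'$; but then $f^{mn_*}(x')=x'$ for all $m$, contradicting $\norm{f^n(x')-x'}\geq\eps$ for large $n$.

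Armed with this, for any $\eps>0$ I would pick $n>-k$ with $f^n(x')$ so close to $x'$ that continuity of $h$ gives $|h(f^n(x')) - h(x')|<\eps$. Since $n+k\geq 0$, we have $f^n(x')=f^{n+k}(x)$, so $\inf_{j\geq 0} h(f^j(x))\leq h(f^{n+k}(x))<h(f^k(x))+\eps$. Letting $\eps\to 0$ and then taking the infimum over $k<0$ yields the desired equality of infima. The main obstacle is precisely the sharpened returning lemma: the bare hypothesis only gives an infimum over all positive $n$, and one must rule out the possibility that the close-return indices are bounded; the pigeonhole argument above handles both the truly aperiodic case and the periodic case uniformly.
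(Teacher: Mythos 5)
Your proof is correct and takes essentially the same route as the paper's: both reduce the claim to showing $\inf_{k\in\mathbb{Z}} p(f^k(x))/q(f^k(x)) = \inf_{k\geq 0} p(f^k(x))/q(f^k(x))$ via the $q$-cocycle rewrite, and both handle negative indices by producing positive-index close returns and invoking continuity of $p/q$. The only difference is cosmetic: you make explicit (and prove) the ``sharpened returning'' lemma that close returns occur at arbitrarily large indices, and thereby cover the periodic case in the same stroke, whereas the paper restricts its proof to aperiodic returning orbits and asserts the existence of the divergent close-return subsequence more tersely.
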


Here we provide the proof for the case of returning orbits ($\forall x \in \orb \infimum_{k > 0} \norm{f^k(x) - x} = 0$) that are not periodic, i.e. there is no such $k > 0$ that $f^k(x) = x$.
By the assumption the deterministic map $f$ preserves some measure with continuous density $q(x)$ on the orbit $\orb(x)$:
\begin{align}
    q(x) = \bigg| \frac{\partial f^{k}}{\partial x}\bigg| q(f^k(x)).
\end{align}
Then, we can rewrite the acceptance test from Theorem \ref{th:o_test} as
\begin{align}
    g(x) = \infimum_{k \in \mathbb{Z}}\bigg\{\frac{p(f^{k}(x))}{p(x)} \bigg| \frac{\partial f^{k}}{\partial x}\bigg| \bigg\} = \frac{q(x)}{p(x)} \infimum_{k \in \mathbb{Z}}\bigg\{\frac{p(f^{k}(x))}{q(f^k(x))}\bigg\}
\end{align}
Using the ``returning'' property of the orbit $\orb(x)$ we can prove
\begin{align}
    \frac{p(f^n(x))}{q(f^n(x))} \geq  \infimum_{k\geq0}\bigg\{\frac{p(f^k(x))}{q(f^k(x))} \bigg\} \;\; \forall n \in \mathbb{Z}.
\end{align}
Indeed, for $n \geq 0$, this holds by the definition of infimum.
For $n < 0$, we consider $x^* = f^{n}(x)$ as a starting point, and use the definition of the returning orbit: $\infimum_{k > 0} \norm{f^k(x^*) - x^*} = 0$.
We define the subsequence of the sequence $\{f^k(x^*)\}$ that converges to $x^*$ as follows.
Let $k_i$ be the first number greater than zero such that $\norm{f^{k_i}(x^*) - x^*} \leq 1/i$, then $\lim_{i\to\infty} f^{k_i}(x^*) = x^*$.
This infinite subsequence exists since $\infimum_{k > 0} \norm{f^k(x^*) - x^*} = 0$ but there is no such $k > 0$ that $f^k(x) = x$.
By continuity of $p$ and $q$,
\begin{align}
    \lim_{i\to\infty} \frac{p(f^{k_i}(x^*))}{q(f^{k_i}(x^*))} = \frac{p(x^*)}{q(x^*)} = \frac{p(f^{n}(x))}{q(f^{n}(x))}.
\end{align}
By the definition of the infimum,
\begin{align}
     \frac{p(f^{k_i}(x^*))}{q(f^{k_i}(x^*))} \geq \infimum_{k\geq0}\bigg\{\frac{p(f^k(x))}{q(f^k(x))} \bigg\} \;\; k_i > -n, \;\;\implies\;\; \frac{p(f^{n}(x))}{q(f^{n}(x))} \geq \infimum_{k\geq0}\bigg\{\frac{p(f^k(x))}{q(f^k(x))} \bigg\}.
\end{align}
Since
\begin{align}
    \frac{p(f^{n}(x))}{q(f^{n}(x))} \geq \infimum_{k\geq0}\bigg\{\frac{p(f^k(x))}{q(f^k(x))} \bigg\} \;\; \forall n \in \mathbb{Z},
\end{align}
then the infimum over the positive $k$ is a lower bound for $p(f^{n}(x))/q(f^{n}(x)) \;\; \forall n$.
Since the infimum is the maximal lower bound, we have
\begin{align}
    \infimum_{k\in \mathbb{Z}}\bigg\{\frac{p(f^k(x))}{q(f^k(x))} \bigg\} \geq \infimum_{k\geq0}\bigg\{\frac{p(f^k(x))}{q(f^k(x))} \bigg\}.
\end{align}
At the same time the infimum over all integers is less than the infimum over positive integers.
Thus, we have
\begin{align}
    \infimum_{k\in \mathbb{Z}}\bigg\{\frac{p(f^k(x))}{q(f^k(x))} \bigg\} = \infimum_{k\geq0}\bigg\{\frac{p(f^k(x))}{q(f^k(x))} \bigg\}.
\end{align}

\subsection{Reversibility}

\subsubsection{Single kernel}
\label{app:o_kernel_reversiblity}

Consider the kernel
\begin{align}
    k_m(x'\cond x) = \delta(x'-f^m(x))g(x) + \delta(x'-x)(1-g(x)),
\end{align}
and assume that it satisfies the fixed point equation ($\int dx\; k_m(x'\cond x)p(x) = p(x')$) what reduces to the system of equations at each point $x'$:
\begin{align}
    g(x')p(x') = g(f^{k}(x'))p(f^{k}(x'))\bigg|\frac{\partial f^{k}}{\partial x}\bigg|_{x=x'} \;\; \forall k \in \mathbb{Z}.
\end{align}
Since it satisfies the fixed point equation, we can define the reverse kernel
\begin{align}
    r_m(x'\cond x) = k_m(x\cond x')\frac{p(x')}{p(x)} = \delta(x-f^m(x')) \frac{p(x')}{p(x)}g(x') + \delta(x-x') \bigg(\frac{p(x')}{p(x)} - \frac{p(x')}{p(x)}g(x')\bigg).
\end{align}
Integrating around the point $x'=x$, we obtain (assuming that $f^{-m}(x)\neq x$)
\begin{align}
    \int_{A(x)} dx'\;r_m(x'\cond x) = (1-g(x)) = \int_{A(x)}dx'\;k_m(x'\cond x).
\end{align}
Integrating around the point $x'=f^{-m}(x)$, we obtain
\begin{align}
    \int_{A(f^{-m}(x))}dx'\;r_m(x'\cond x) &= \int_{A(f^{-m}(x))}dx'\; \delta(x-f^m(x')) g(x')\frac{p(x')}{p(x)} = (x' = f^{-m}(y))\\
    &= \int_{f^{m}(A(f^{-m}(x)))} dy\;\delta(x-y) g(f^{-m}(y))\frac{p(f^{-m}(y))}{p(x)} \bigg| \frac{\partial f^{-m}}{\partial y} \bigg| \\
    &= g(f^{-m}(x))\frac{p(f^{-m}(x))}{p(x)} \bigg| \frac{\partial f^{-m}}{\partial x} \bigg| = g(x)
\end{align}
And for the forward kernel
\begin{align}
    \int_{A(f^{-m}(x))}dx'\;k_m(x'\cond x) = \int_{A(f^{-m}(x))}dx'\delta(x'-f^m(x))g(x)
\end{align}
Since $m$ is a fixed number, we can choose the radius of $A(f^{-m}(x))$ small enough that $f^m(x)$ does not lie in $A(f^{-m}(x))$.
However, if $f^m(x)=f^{-m}(x)$, we have
\begin{align}
    \int_{A(f^{-m}(x))}dx'\;k_m(x'\cond x) = g(x) = \int_{A(f^{-m}(x))}dx'\;r_m(x'\cond x).
\end{align}
Hence, the kernel $k_m(x'\cond x)$ is reversible if and only if $f^{-m}(x') = f^{m}(x')$.

\subsubsection{Linear combination}
\label{app:whole_orbit_reversibility}

Consider the kernel
\begin{align}
    k_m(x'\cond x) = \delta(x'-f^m(x))g_m(x) + \delta(x'-x)(1-g_m(x)),
\end{align}
that preserves target measure $p(x)$ for any integer $m$.
Then the linear combination
\begin{align}
    k(x'\cond x) = \sum_{m \in \mathbb{Z}} w_m k_m(x'\cond x), \;\; \sum_{m \in \mathbb{Z}} w_m = 1, \;\; w_m \geq 0
\end{align}
also preserves the target measure.
Hence, we can define the reverse kernel as
\begin{align}
    r(x'\cond x) = k(x\cond x') \frac{p(x')}{p(x)} = \sum_{m\in \mathbb{Z}} w_m r_m(x'\cond x),
\end{align}
where $r_m$ is the reverse kernel of $k_m$ as we derived in Appendix \ref{app:o_kernel_reversiblity}:
\begin{align}
    r_m(x'\cond x) = k_m(x\cond x')\frac{p(x')}{p(x)} = \delta(x-f^m(x')) \frac{p(x')}{p(x)}g_m(x') + \delta(x-x') \bigg(\frac{p(x')}{p(x)} - \frac{p(x')}{p(x)}g_m(x')\bigg).
\end{align}
Further, we assume that all the points of the orbits are distinct, i.e. orbit is neither periodic nor stationary.
Integrating the kernel $r(x'\cond x)$ around the point $x'=x$, we get
\begin{align}
    \int_{A(x)} dx'\;r(x'\cond x) = \sum_{m\in \mathbb{Z}} w_m \int_{A(x)} dx'\;r_m(x'\cond x) = \sum_{m\in \mathbb{Z}} w_m(1-g_m(x)) = \int_{A(x)}dx'\;k(x'\cond x).
\end{align}
Integrating around the point $x'=f^{-m}(x)$, we obtain
\begin{align}
    \int_{A(f^{-m}(x))}dx'\;r(x'\cond x) &= \int_{A(f^{-m}(x))}dx'\; w_m\delta(x-f^m(x')) g_m(x')\frac{p(x')}{p(x)} = (x' = f^{-m}(y))\\
    &= \int_{f^{m}(A(f^{-m}(x)))} dy\;w_m\delta(x-y) g_m(f^{-m}(y))\frac{p(f^{-m}(y))}{p(x)} \bigg| \frac{\partial f^{-m}}{\partial y} \bigg| \\
    &= w_mg_m(f^{-m}(x))\frac{p(f^{-m}(x))}{p(x)} \bigg| \frac{\partial f^{-m}}{\partial x} \bigg| = w_mg_m(x)
\end{align}
And for the forward kernel
\begin{align}
    \int_{A(f^{-m}(x))}dx'\;k(x'\cond x) &= \int_{A(f^{-m}(x))}dx'\;w_{-m}k_{-m}(x'\cond x) =\\  &= \int_{A(f^{-m}(x))}dx'w_{-m}\delta(x'-f^{-m}(x))g_{-m}(x) = w_{-m}g_{-m}(x)
\end{align}
In all these derivations we integrated the kernels around a single point assuming that the area $A(x)$ around the point $x$ can be chosen so small that includes $x$ and does not include other points from the orbit.
However, this is not the case for returning orbits.
By definition, the returning orbit has a subsequence on the orbit that converges to the starting point.
Nevertheless, we can choose such small area $A(x)$ that the point weights of the subsequence inside the area $A(x)$ go to zero since all the weights are supposed to be positive and sum up to $1$.

Thus, we see that for aperiodic orbits of $f$ the linear combination of kernels $k(x'\cond x) = \sum_{m \in \mathbb{Z}} w_m k_m(x'\cond x)$ is reversible if and only if $w_{-m}g_{-m}(x) = w_{m}g_{m}(x), \;\; \forall m \in \mathbb{Z}$.
The same criterion can be formulated for the periodic orbits as well if we assume that the only positive weights are $w_1,\ldots, w_{T-1}$, where $T$ is the period of the orbit.
Then the linear combination is reversible if and only if $w_{T-m}g_{-m}(x) = w_{m}g_{m}(x), \;\; \forall m \in [1,T-1]$.

\subsection{Mean ergodic theorem for the escaping orbital kernel}
\label{app:limiting_case}

We analyse the kernel
\begin{align}
    k(x'\cond x) = \delta(x'-f(x))g(x) + \delta(x'-x)(1-g(x))
\end{align}
that preserves target measure $p(x)$, and $g(x) > 0$.
Applying this kernel to the delta function $\delta(x-x_0)$ we obtain
\begin{align}
    \int dx\; k(x'\cond x) \delta(x-x_0) = (1-g(x_0))\delta(x'-x_0) + g(x_0)\delta(x'-f(x_0)).
\end{align}
Thus, the kernel leaves a part of mass $(1-g(x_0))$ at the initial point and propagates the other part $g(x_0)$ further in orbit.
Let the initial distribution be the delta-function $p_0(x) = \delta(x-x_0)$.
Iteratively applying the kernel $k(x'\cond x)$, at time $t+1$, we obtain a chain of weighted delta-functions along the orbit $\orb(x_0)$ of $f$:
\begin{align}
    \int dx\; k(x'\cond x) p_t(x) = p_{t+1}(x') = \sum_{i=0}^{t+1} \omega_i^{t+1}\delta(x'-f^i(x_0)).
\end{align}
Denoting the points on the orbit as $x_i = f^i(x_0)$, we can write the recurrence relation for the weights of the delta-function at $x_i$ as
\begin{align}
    \omega_{i}^{t+1} = (1-g(x_{i}))\omega_i^{t} + g(x_{i-1})\omega_{i-1}^{t}, \;\; t > 0, \; i > 0.
    \label{eq:omega_rec}
\end{align}
For periodic orbits with period $T$, we consider the lower index of $\omega_{i}^{t+1}$ by modulo $T$.
Further, denoting the sum of the weights over time as
\begin{align}
    S_{i}^{t} = \sum_{t'=0}^t \omega_i^{t'},
\end{align}
we obtain the following recurrence relation by summation of \eqref{eq:omega_rec}.
\begin{align}
    S_{i}^{t+1} - \omega_i^0 = (1-g(x_{i}))S_i^{t} + g(x_{i-1})S_{i-1}^{t}
    \label{eq:sum_rec}
\end{align}
Then we denote the solution of this relation as $S^t_i = \alpha^t_i \widehat{S}_i^t$, where $\widehat{S}_i^t$ is the solution of the corresponding homogeneous relation, and $\alpha^t_i$ is the constant variation.
\begin{align}
    \alpha_{i}^{t+1}\widehat{S}_{i}^{t+1} &= (1-g(x_{i}))\alpha_{i}^{t}\widehat{S}_{i}^{t} + g(x_{i-1})S_{i-1}^{t} + \omega_i^0 \\
    \alpha_{i}^{t+1}(1-g(x_{i}))\widehat{S}_{i}^{t} &= (1-g(x_{i}))\alpha_{i}^{t}\widehat{S}_{i}^{t} + g(x_{i-1})S_{i-1}^{t} + \omega_i^0\\
    \alpha_{i}^{t+1} &= \alpha_{i}^{t} + \frac{g(x_{i-1})S_{i-1}^{t} + \omega_i^0}{(1-g(x_{i}))\widehat{S}_{i}^{t}} \\
    \alpha_{i}^{t} &= \alpha_i^0 + \sum_{t'=0}^{t-1}\frac{g(x_{i-1})S_{i-1}^{t'} + \omega_i^0}{(1-g(x_{i}))\widehat{S}_{i}^{t'}}, \;\; t > 0
\end{align}
Together with the solution $\widehat{S}_i^t = (1-g(x_i))^t\widehat{S}_i^0$ of the homogeneous relation, we have
\begin{align}
    S_{i}^{t} & = \bigg(\alpha_i^0 + \sum_{t'=0}^{t-1}\frac{g(x_{i-1})S_{i-1}^{t'} + \omega_i^0}{(1-g(x_i))^{t'+1}S_i^0}\bigg)(1-g(x_i))^{t}\widehat{S}_i^0\\
    S_{i}^{t} & = (1-g(x_i))^{t}S_i^0 + \sum_{t'=0}^{t-1}\frac{g(x_{i-1})S_{i-1}^{t'} + \omega_i^0}{(1-g(x_i))^{t'-(t-1)}}
\end{align}
The first term of the solution goes to zero.
Hence, for aperiodic orbits we have
\begin{align}
    \lim_{t \to\infty} S_0^t = \frac{1}{g(x_0)}
\end{align}
since $S_{-1}^t = 0$, and $\omega_0^0 = 1$. For $i > 0$, $\omega_i^0 = 0$, and the last sum we can split as
\begin{align}
    \lim_{t \to\infty} S_{i}^{t} = \lim_{t \to\infty} \bigg[(1-g(x_i))^{t-1} \sum_{t'=0}^{t_1}\frac{g(x_{i-1})S_{i-1}^{t'}}{(1-g(x_i))^{t'}} + \sum_{t'=t_1}^{t-1}\frac{g(x_{i-1})S_{i-1}^{t'}}{(1-g(x_i))^{t'-(t-1)}} \bigg].
\end{align}
Taking $t_1$ large enough, we obtain
\begin{align}
    \lim_{t \to\infty} S_{1}^{t} = \lim_{t \to\infty} \sum_{t'=t_1}^{t-1}\frac{g(x_0)/g(x_0)}{(1-g(x_1))^{t'-(t-1)}} = \lim_{t \to\infty} \sum_{t'=0}^{t-1-t_1}(1-g(x_1))^{t'} = \frac{1}{g(x_1)}.
\end{align}
Applying this recursively, we obtain
\begin{align}
    \lim_{t \to\infty} S_{i}^{t} = \lim_{t \to\infty} \sum_{t'=0}^t \omega_i^{t'} = \frac{1}{g(x_i)}, \;\; i \geq 0.
\end{align}
Note that the limiting behaviour of the chain on the aperiodic orbit depends on the initial conditions $\omega_0^0,\ldots,\omega_i^0,\ldots$, and each individual weight goes to zero with time: $\lim_{t\to\infty}\omega_i^t = 0$.
Thus, the total mass drifts away from the initial point down the orbit.

The chain behaves differently in the case of the periodic orbit.
Since the number of points on the orbit is finite, the total mass $S_i^t$ goes to infinity with number of iterations $t$ (if $g(x_i) < 1$).
However, the average mass over iterations remains constant
\begin{align}
    \frac{1}{t}\sum_{i=0}^{T-1}S_i^t = 1, \;\; t \geq 0,
    \label{eq:average_sum}
\end{align}
where $T$ is the period of orbit $\orb(x_0)$.
Thus, we provide the  
\begin{align}
    \lim_{t \to\infty}\frac{1}{t}S_{i}^{t} & = \lim_{t \to\infty} \frac{1}{t}\sum_{t'=0}^{t-1}\frac{g(x_{i-1})S_{i-1}^{t'} + \omega_i^0}{(1-g(x_i))^{t'-(t-1)}} = \\
    & = \lim_{t \to\infty}\bigg[g(x_{i-1})\sum_{t'=0}^{t-1}\frac{S_{i-1}^{t'}}{t}(1-g(x_i))^{(t-1)-t'} + \frac{\omega_i^0}{t}\sum_{t'=0}^{t-1}(1-g(x_i))^{t'}\bigg]
\end{align}
The second series goes to zero when $t\to\infty$.
The first series converges since $S_{i-1}^{t'}/t \leq 1, \forall t'$.
Denoting the limit of $S_{i}^{t}/t$ as $S_i$ and putting it into the original relation \eqref{eq:sum_rec}, we get
\begin{align}
    S_i = (1-g(x_i))S_i + g(x_{i-1})S_{i-1} \implies S_i = \frac{g(x_{i-1})}{g(x_i)}S_{i-1}.
\end{align}
Finally, taking the limit in \eqref{eq:average_sum}, we have
\begin{align}
    \sum_{i=0}^{T-1}S_i = 1 \implies \sum_{i=0}^{T-1}S_0 \frac{g(x_{0})}{g(x_i)} = 1 \implies S_0 = \frac{\frac{1}{g(x_0)}}{\sum_{i=0}^{T-1}\frac{1}{g(x_i)}} \implies S_j = \frac{\frac{1}{g(x_j)}}{\sum_{i=0}^{T-1}\frac{1}{g(x_i)}}.
\end{align}
By the assumption, the kernel $k(x'\cond x)$ preserves the target distribution $p(x)$.
Hence, for the test $g(x)$, we have
\begin{align}
    \frac{g(x_j)}{g(x_i)} = \frac{p(x_i)\big|\frac{\partial f^i}{\partial x}\big|_{x=x_0}}{p(x_j)\big|\frac{\partial f^j}{\partial x}\big|_{x=x_0}}.
\end{align}
Thus, for periodic orbits we have
\begin{align}
    \lim_{t\to \infty} \frac{1}{t}S_{i}^{t} = \lim_{t\to \infty} \frac{1}{t} \sum_{t'=0}^t \omega_i^{t'} = \frac{p(f^i(x_0))\big|\frac{\partial f^{i}}{\partial x}\big|_{x=x_0}}{\sum_{j=1}^Tp(f^j(x_0))\big|\frac{\partial f^{j}}{\partial x}\big|_{x=x_0}}.
\end{align}
Note that the limit does not depend on the initial values of the weights $\omega_0^0,\ldots,\omega_{T-1}^0$.

\subsection{Average escape time}
\label{app:escape_time}
Consider the kernel
\begin{align}
    k(x'\cond x) = \delta(x'-f(x))g(x) + \delta(x'-x)(1-g(x))
\end{align}
that preserves target measure $p(x)$, and $g(x) > 0$.
Here we treat it stochastically assuming that we accept $f(x)$ with the acceptance probability $g(x)$ and stay at the same point $x$ with probability $(1-g(x))$.
Assuming that we start at some point $x_0$, we define the escape time $t_n$ as the number of iterations required to leave $x_{n-1}$, or equivalently the time of the first acceptance of $x_n = f^n(x_0)$.
Then the expectation of $t_n$ is
\begin{align}
    \mean t_n = & \sum_{t_0,\ldots,t_{n-1}}^\infty \bigg(\sum_{i=0}^{n-1}(t_i+1)\bigg) \prod_{i=0}^{n-1} g(x_{i})(1-g(x_{i}))^{t_i} = \\
    = &\sum_{t_1,\ldots,t_{n-1}}^\infty \bigg(\sum_{i=1}^{n-1}(t_i+1)\bigg) \prod_{i=0}^{n-1} g(x_{i})(1-g(x_{i}))^{t_i} + \\ 
    &+ \sum_{t_1,\ldots,t_{n-1}}^\infty \sum_{t_0=0}^\infty (t_0 + 1)\prod_{i=0}^{n-1} g(x_{i})(1-g(x_{i}))^{t_i}
\end{align}
The sum over $t_0$ in the last term is
\begin{align}
    \sum_{t_0=0}^\infty (t_0 + 1) g(x_{0})(1-g(x_{0}))^{t_0} = 1 + \underbrace{\sum_{t_0=1}^\infty t_0 g(x_{0})(1-g(x_{0}))^{t_0}}_{S},
\end{align}
where the last sum can be calculated via the index shifting trick as follows.
\begin{align}
    S & = \sum_{t_0=1}^\infty t_0 g(x_{0})(1-g(x_{0}))^{t_0} = g(x_{0})(1-g(x_{0})) + \sum_{t_0=2}^\infty t_0 g(x_{0})(1-g(x_{0}))^{t_0} = \\
    & = g(x_{0})(1-g(x_{0})) +  \sum_{t_0=1}^\infty (t_0 + 1) g(x_{0})(1-g(x_{0}))^{t_0+1} = \\ 
    & = g(x_{0})(1-g(x_{0})) + (1-g(x_{0}))\underbrace{\sum_{t_0=1}^\infty t_0 g(x_{0})(1-g(x_{0}))^{t_0}}_{S} + (1-g(x_0))^2
\end{align}
Hence,
\begin{align}
    S = (1 - g(x_0)) + \frac{1}{g(x_0)} - 2 + g(x_0) = \frac{1}{g(x_0)} - 1.
\end{align}
Substituting all the derivations back into the formula for expectation, we have
\begin{align}
    \mean t_n = &\sum_{t_1,\ldots,t_{n-1}}^\infty \bigg(\sum_{i=1}^{n-1}(t_i+1)\bigg) \prod_{i=0}^{n-1} g(x_{i})(1-g(x_{i}))^{t_i} + \\ 
    &+ \sum_{t_1,\ldots,t_{n-1}}^\infty \sum_{t_0=0}^\infty (t_0 + 1)\prod_{i=0}^{n-1} g(x_{i})(1-g(x_{i}))^{t_i} = \\
    = & \sum_{t_1,\ldots,t_{n-1}}^\infty \bigg(\sum_{i=1}^{n-1}(t_i+1)\bigg) \prod_{i=0}^{n-1} g(x_{i})(1-g(x_{i}))^{t_i} + \frac{1}{g(x_0)}.
\end{align}
Applying this reasoning $(n-1)$ more times, we obtain
\begin{align}
    \mean t_n = \sum_{i=0}^{n-1} \frac{1}{g(x_i)}.
\end{align}

\subsection{Diffusing orbital kernel}
\label{app:diffusing_kernel}

We start with the kernel
\begin{align}
    k(x'\cond x) = \delta(x'-f(x))g^+(x) + \delta(x'-f^{-1}(x))g^-(x) + \delta(x-x')(1-g^+(x)-g^-(x)).
\end{align}
Putting this kernel into $Kp = p$, we get
\begin{align}
\begin{split}
    \int dx\;\delta(x'-f(x))g^+(x) p(x) + \int dx\;\delta(x'-f^{-1}(x))g^-(x)p(x) + \\
    + p(x')(1-g^+(x')-g^-(x')) = p(x'),
\end{split}
\end{align}
which yields the following condition for $Kp = p$.
\begin{align}
    g^+(f^{-1}(x'))p(f^{-1}(x')) \bigg|\frac{\partial f^{-1}}{\partial x'}\bigg|  + g^-(f(x'))p(f(x')) \bigg|\frac{\partial f}{\partial x'}\bigg| = g^+(x')p(x')+g^-(x')p(x')
\end{align}
One of the solutions here is the linear combination of two escaping orbital kernels, which can be obtained by separately matching terms with $g^+$ and then the terms with $g^-$.
Another solution, which yields the diffusing orbital kernel, is obtained by matching 
\begin{align}
    g^+(f^{-1}(x'))p(f^{-1}(x')) \bigg|\frac{\partial f^{-1}}{\partial x'}\bigg| = g^-(x')p(x') \;\text{ and }\; g^-(f(x'))p(f(x')) \bigg|\frac{\partial f}{\partial x'}\bigg| = g^+(x')p(x').
\end{align}
Considering $x'$ as some point from the orbit $\orb(x_0)$, i.e. $x'=f^k(x_0)$, both equations yield the same system
\begin{align}
    \frac{g^+(f^{k-1}(x_0))}{g^-(f^k(x_0))} = \frac{p(f^k(x_0))}{p(f^{k-1}(x_0))}\frac{\big|\partial f^{k}/\partial x\big|_{x=x_0}}{\big|\partial f^{k-1}/\partial x\big|_{x=x_0}}, \;\; \forall k \in \mathbb{Z}.
\end{align}
Now we use the same trick as for the escaping orbital kernel assuming that $f$ preserves some measure with the density $q(x)$, i.e. $q(x_0) = q(f^k(x_0)) \big|\partial f^k/\partial x \big|_{x=x_0}$.
Expressing the Jacobians in terms of the density $q$, we obtain
\begin{align}
    \frac{g^+(f^{k-1}(x_0))}{g^-(f^k(x_0))} = \frac{p(f^k(x_0))}{p(f^{k-1}(x_0))}\frac{q(f^{k-1}(x_0))}{q(f^{k}(x_0))}, \;\; \forall k \in \mathbb{Z}.
\end{align}
Once again, there are two options to design the test function here.
One of the options is to say let $g^+(x) = g^-(x) \propto q(x)/p(x)$, but saying so we will end up with the mixture of escaping orbital kernels.
Another option is to consider the tests as follows
\begin{align}
    g^+(x) = \frac{p(f(x))}{q(f(x))} c \;\;\text{ and }\;\; g^-(x) = \frac{p(f^{-1}(x))}{q(f^{-1}(x))} c,
    \label{eq:diffusing_tests_with_q}
\end{align}
where the constant $c$ comes from the fact that the rejection probability is non-negative on the whole orbit $\orb(x_0)$, i.e. $(1-g^+(x)-g^-(x)) \geq 0$, and we have
\begin{align}
    c\bigg(\frac{p(f^{k+1}(x_0))}{q(f^{k+1}(x_0))} + \frac{p(f^{k-1}(x_0))}{q(f^{k-1}(x_0))}\bigg) \leq 1.
\end{align}
Thus, we end up with the following value of $c$
\begin{align}
    c = \infimum_{k\in \mathbb{Z}}\bigg\{\frac{1}{p(f^{k+1}(x_0))/q(f^{k+1}(x_0)) + p(f^{k-1}(x_0))/q(f^{k-1}(x_0))}\bigg\},
\end{align}
or with another lower bound, which may be not optimal, but useful in practice:
\begin{align}
    c' = \frac{1}{2}\infimum_{k\in \mathbb{Z}}\bigg\{\frac{q(f^{k}(x_0))}{p(f^{k}(x_0))}\bigg\} \leq c.
\end{align}
Putting these constants back into the tests \eqref{eq:diffusing_tests_with_q}, we get
\begin{align}
    g^+(x) = p(f(x))\bigg|\frac{\partial f}{\partial x}\bigg| c(x), \;\; g^-(x) = p(f^{-1}(x))\bigg|\frac{\partial f^{-1}}{\partial x}\bigg| c(x),
\end{align}
where $c(x)$ may be chosen as
\begin{align}
     c'(x) = \frac{1}{2}\infimum_{k \in \mathbb{Z}}\bigg\{\frac{1}{p(f^k(x))\big|\frac{\partial f^{k}}{\partial x}\big|}\bigg\}, \;\text{ or }\; c(x) = \infimum_{k \in \mathbb{Z}}\bigg\{\frac{1}{p(f^{k+1}(x))\big|\frac{\partial f^{k+1}}{\partial x}\big|+p(f^{k-1}(x))\big|\frac{\partial f^{k-1}}{\partial x}\big|}\bigg\}.
\end{align}

\subsection{Reversibility of the diffusing orbital kernel}
\label{app:reversibility_diffusing}

Consider the diffusing orbital kernel
\begin{align}
    k(x'\cond x) = \delta(x'-f(x))g^+(x) + \delta(x'-f^{-1}(x))g^-(x) + \delta(x'-x)(1-g^+(x)-g^-(x)),
\end{align}
that preserves the target measure ($\int dx\; k_m(x'\cond x)p(x) = p(x')$) what reduces to the system of equations:
\begin{align}
    \frac{g^+(f^{k-1}(x_0))}{g^-(f^k(x_0))} = \frac{p(f^k(x_0))}{p(f^{k-1}(x_0))}\frac{\big|\partial f^{k}/\partial x\big|_{x=x_0}}{\big|\partial f^{k-1}/\partial x\big|_{x=x_0}}, \;\; \forall k \in \mathbb{Z}.
\end{align}
Since it satisfies the fixed point equation, we can define the reverse kernel
\begin{align}
    r(x'\cond x) = k(x\cond x')\frac{p(x')}{p(x)} = \delta(x-f(x')) \frac{p(x')}{p(x)}g^+(x') + \delta(x-f^{-1}(x')) \frac{p(x')}{p(x)}g^-(x') + \\ 
    + \delta(x-x') \frac{p(x')}{p(x)}(1 - g^+(x') - g^-(x')).
\end{align}
Integrating around the point $x'=x$, we obtain
\begin{align}
    \int_{A(x)} dx'\;r(x'\cond x) = (1 - g^+(x) - g^-(x)) = \int_{A(x)}dx'\;k(x'\cond x).
\end{align}
Integrating around the point $x'=f^{-1}(x)$, we obtain
\begin{align}
    \int_{A(f^{-1}(x))}dx'\;r(x'\cond x) &= \int_{A(f^{-1}(x))}dx'\; \delta(x-f(x')) g^+(x')\frac{p(x')}{p(x)} = (x' = f^{-1}(y))\\
    &= \int_{f(A(f^{-1}(x)))} dy\;\delta(x-y) g^+(f^{-1}(y))\frac{p(f^{-1}(y))}{p(x)} \bigg| \frac{\partial f^{-1}}{\partial y} \bigg| \\
    &= g^+(f^{-1}(x))\frac{p(f^{-1}(x))}{p(x)} \bigg| \frac{\partial f^{-1}}{\partial x} \bigg| = g^-(x)
\end{align}
And for the forward kernel, the integral around $x'=f^{-1}(x)$ yields
\begin{align}
    \int_{A(f^{-1}(x))}dx'\;k(x'\cond x) = \int_{A(f^{-1}(x))}dx'\delta(x'-f^{-1}(x))g^-(x) = g^-(x).
\end{align}
Thus, we have
\begin{align}
    \int_{A(f^{-1}(x))}dx'\;r(x'\cond x) = \int_{A(f^{-1}(x))}dx'\;k(x'\cond x)
\end{align}
The same reasoning applies for the integration around $x' = f(x)$, hence, we conclude that the diffusing orbital kernel is reversible since $r(x'\cond x) = k(x'\cond x)$.

\subsection{Mean ergodic theorem for the diffusing orbital kernel}
\label{app:diffusive_mean_ergodic}

We analyse the diffusive orbital kernel
\begin{align}
    k(x'\cond x) = \delta(x'-f(x))g^+(x) + \delta(x'-f^{-1}(x))g^-(x) + \delta(x-x')(1-g^+(x)-g^-(x))
\end{align}
that preserves target measure $p(x)$, thus having (from Appendix \ref{app:diffusing_kernel})
\begin{align}
    g^+(x) = \frac{p(f(x))}{q(f(x))} c \;\;\text{ and }\;\; g^-(x) = \frac{p(f^{-1}(x))}{q(f^{-1}(x))} c.
\end{align}
We assume that the map $f$ preserves some density $q$ only to simplify the derivations. One can as well consider the tests derived in Appendix \ref{app:diffusing_kernel}.

Applying this kernel to the delta function $\delta(x-x_0)$ we obtain
\begin{align}
\begin{split}
    \int dx\; k(x'\cond x) \delta(x-x_0) = & g^+(x_0)\delta(x'-f(x_0)) + g^-(x_0)\delta(x'-f(x_0)) + \\
    & + (1-g^+(x_0)-g^-(x_0))\delta(x'-x_0).
\end{split}
\end{align}
That is, the kernel leaves a part of mass $(1-g^+(x_0)-g^-(x_0))$ at the initial point and moves the other part to the adjacent points $f(x_0)$ and $f^{-1}(x_0)$.
Thus, starting from the delta-function $p_0(x) = \delta(x-x_0)$, and 
iteratively applying the kernel $k(x'\cond x)$, at time $t+1$, we obtain a chain of weighted delta-functions along the orbit $\orb(x_0)$ of $f$:
\begin{align}
    \int dx\; k(x'\cond x) p_t(x) = p_{t+1}(x') = \sum_{i=-(t+1)}^{t+1} \omega_i^{t+1}\delta(x'-f^i(x_0)).
\end{align}
Denoting the points on the orbit as $x_i = f^i(x_0)$, we can write the recurrence relation for the weights of the delta-function at $x_i$ as
\begin{align}
    \omega_{i}^{t+1} = (1-g^+(x_{i})-g^-(x_{i}))\omega_i^{t} + g^+(x_{i-1})\omega_{i-1}^{t} + g^-(x_{i+1})\omega_{i+1}^t, \;\; t \geq 0.
    \label{eq:diffusive_recurrence}
\end{align}
For periodic orbits with period $T$, we consider the lower index of $\omega_{i}^{t+1}$ by modulo $T$.
Further, denoting the sum of the weights over time as
\begin{align}
    S_{i}^{t} = \sum_{t'=0}^{t-1} \omega_i^{t'},
\end{align}
we obtain the following recurrence relation by summation of \eqref{eq:diffusive_recurrence}.
\begin{align}
    S_i^{t+1} = \omega_i^0 + (1-g^+(x_{i})-g^-(x_{i}))S_i^t +  g^+(x_{i-1})S_{i-1}^{t} + g^-(x_{i+1})S_{i+1}^t
    \label{eq:recurrence_for_sum_diffusing}
\end{align}
Decomposing the solution as $S_i^t = \alpha_i^t \widehat{S}_i^t$, where $\widehat{S}_i^t$ is the solution of the corresponding homogeneous relation, and $\alpha^t_i$ is the constant variation, we get
\begin{align}
    S_i^t =  (1-g^+(x_{i})-g^-(x_{i}))^tS_i^0 + \sum_{t'=0}^{t-1}\frac{g^+(x_{i-1})S_{i-1}^{t'} + g^-(x_{i+1})S_{i+1}^{t'} + \omega_i^0}{(1-g^+(x_{i})-g^-(x_{i}))^{t'-(t-1)}}.
\end{align}
Then for the time-average we have
\begin{align}
    \frac{S_i^t}{t} =  (1-g^+(x_{i})-g^-(x_{i}))^t\frac{S_i^0}{t} + \sum_{t'=0}^{t-1}\frac{g^+(x_{i-1})S_{i-1}^{t'}/t + g^-(x_{i+1})S_{i+1}^{t'}/t + \omega_i^0/t}{(1-g^+(x_{i})-g^-(x_{i}))^{t'-(t-1)}},
\end{align}
which clearly converges when $t\to \infty$ since $\sum_i S_{i}^t = t$, hence, $S_{i+1}^{t'}/t < 1, \;\; S_{i-1}^{t'}/t < 1$, and
\begin{align}
    \lim_{t\to \infty} \frac{S_i^t}{t} = & \lim_{t\to \infty} \sum_{t'=0}^{t-1}\frac{g^+(x_{i-1})S_{i-1}^{t'}/t + g^-(x_{i+1})S_{i+1}^{t'}/t}{(1-g^+(x_{i})-g^-(x_{i}))^{t'-(t-1)}} \leq \\
    & \leq \lim_{t\to \infty} \sum_{t'=0}^{t-1}\frac{g^+(x_{i-1}) + g^-(x_{i+1})}{(1-g^+(x_{i})-g^-(x_{i}))^{t'-(t-1)}}.
\end{align}
Thus, the time-average converges since it is dominated by the convergent series.
To analyse the limit of the time average, we first simplify the notation a little bit by noting that
\begin{align}
    g^+(x) = g(f(x)) \;\text{ and }\; g^-(x) = g(f^{-1}(x)), \;\text{ where }\; g(x) = \frac{p(x)}{q(x)}c.
\end{align}
Then we have $g(x_i) = g^+(x_{i-1}) = g^-(x_{i+1})$, $0 < g(x_i) \leq 1$ and $0 \leq g(x_{i+1}) + g(x_{i-1}) < 1$.
Denoting $A_i^t=S_i^{t}/t$ in \eqref{eq:recurrence_for_sum_diffusing} we write the recurrence relation for the time average
\begin{align}
    A_i^{t+1} = \frac{\omega_i^0}{t} + (1-g(x_{i+1})-g(x_{i-1}))A_i^t +  g(x_{i})(A_{i-1}^{t} + A_{i+1}^t).
\end{align}
Taking the time $t$ large enough we get rid off the initial condition $\omega_i^0$ and consider the stationary point of this recurrence relation.
\begin{align}
    (g(x_{i+1})+g(x_{i-1}))A_i =  g(x_{i})(A_{i-1} + A_{i+1}).
\end{align}
Further, we consider the stationary point $A_i \propto g(x_i)$, and use the fact that the sum of the average across all points equals to $1$: $\sum_i S_{i}^t / t = \sum_i A_{i}^t = 1$, where we take the sum over $i$ for aperiodic orbits $i \in \mathbb{Z}$, and for periodic orbits we take $i$ modulo $T$.
Thus, we have
\begin{align}
    A_{i}^t = \frac{A_{i}^t}{\sum_{j} A_j^t} = \frac{g(x_i)}{\sum_j g(x_j)} = \frac{p(x_i)/q(x_i)}{\sum_j p(x_j)/q(x_j)}.
\end{align}
Using that $q(x_j) \big|\partial f^j/\partial x \big| = q(x_i) \big|\partial f^i/\partial x \big|$, we obtain
\begin{align}
    A_{i}^t = \frac{p(x_i)}{\sum_j p(x_j)(q(x_i)/q(x_j))} = \frac{p(x_i)\big|\partial f^i/\partial x \big|}{\sum_j p(x_j)\big|\partial f^j/\partial x \big|}.
\end{align}
For periodic orbits, the time-average always converges to
\begin{align}
    A_{i}^t = \frac{p(x_i)\big|\partial f^i/\partial x \big|}{\sum_{j=0}^{T-1} p(x_j)\big|\partial f^j/\partial x \big|}.
\end{align}
For aperiodic orbits, if the series $\sum_{j = -\infty}^{+\infty} p(x_j)\big|\partial f^j/\partial x \big|$ converges, then the time-average converges to 
\begin{align}
    A_{i}^t = \frac{p(x_i)\big|\partial f^i/\partial x \big|}{\sum_{j = -\infty}^{+\infty} p(x_j)\big|\partial f^j/\partial x \big|}.
\end{align}
The uniqueness of the limit follows from the mean-ergodic theorem and the stochastic interpretation of the process.
Indeed, in such case the kernel $k(x'\cond x)$ is irreducible on the orbit $\orb(x_0)$.

\section{Applications}
\label{app:examples}

\subsection{Linear combination}
\label{app:linear_combination}
\begin{algorithm}[H]
  \caption{Linear combination of escaping orbital kernels}
  \begin{algorithmic}  
    \INPUT{target density $p(x)$, auxiliary density $p(v\cond x)$ and a sampler from $p(v\cond x)$}
    \INPUT{continuous bijection $f(x,v)$, weights of the combination $\{w_m\}_{m=0}^{T-1}$}
    \STATE initialize $x$
    \FOR{$i = 0\ldots n$}
        \STATE sample $v \sim p(v \cond x)$
        \STATE propose $(x_m,v_m) = f^m(x,v),\; m\in [0,T-1]$
        \STATE evaluate $p(x_m,v_m) \big| \frac{\partial f^m}{\partial [x,v]}\big|$ for all $m \in [0,T-1]$
        \STATE $g_m = \min_{k \in [0,T-1]}\{\frac{p(f^{mk}(x,v))}{p(x,v)} \big| \frac{\partial f^{mk}}{\partial [x,v]}\big|\}\;\; m \in [0,T-1]$ (can be done in parallel)
        \STATE $ x_{m}, w_m \gets
            \begin{cases} 
            x_{m}, w_m , \text{ with probability } g_m\\
            x, w_m , \;\text{ with probability } (1-g_m)
            \end{cases}$
        \STATE $\text{samples} \gets \text{samples} \cup \{(x_m,w_m)\}_{m=0}^{T-1}$
        \STATE $x \gets x_{j}$ with probability $w_j$
    \ENDFOR
    \OUTPUT{samples}
  \end{algorithmic}
  \label{alg:lc_omcmc}
\end{algorithm}
In this section we discuss how one can make use of the linear combination of escaping orbital kernels.
The main benefit of this procedure is that one can evaluate the tests for all powers of $f$ (see Corollary \ref{th:m_step_test}) almost for free.
With more details, consider the set of kernels $k_m(x'\cond x) = \delta(x'-f^m(x))g_m(x) + \delta(x'-x)(1-g_m(x)), \; \forall m \in \mathbb{Z}$, where $g_m(x)$ are tests that guarantee $K_m p= p$.
Then any linear combination of such kernels:
\begin{align}
    k(x'\cond x) = \sum_{m \in \mathbb{Z}}w_m k_m(x'\cond x), \;\; \sum_{m \in \mathbb{Z}} w_m = 1, \;\; w_m \geq 0,
    \label{eq:linear_combination}
\end{align}
clearly admits $Kp = p$.
Considering periodic orbits (period $T$), and taking the tests $g_m(x)=\infimum_{k \in \mathbb{Z}} \{p(f^{mk}(x))/p(x)\big|\partial f^{mk}/\partial x \big| \}$ (as in Corollary \ref{th:m_step_test}) the evaluation of all tests reduces to the measurement of the density (and the determinant of Jacobian) over whole orbit, and then evaluation of minimums across different subsets of points.
Once all the tests are calculated, we can simulate accept/reject steps and then collect all the simulated samples $x_m$ with corresponding weights $w_m$.
The next starting point of the chain can be selected by sampling from the collected samples with probabilities $w_m$.
See Algorithm \ref{alg:lc_omcmc}.

The reversibility criterion from Theorem \ref{th:reversibility} naturally extends to the linear combination as follows.
\begin{proposition}{(Reversibility of the linear combination)\\}
Consider the linear combination of kernels \eqref{eq:linear_combination}, where for each kernel we have $K_m p = p$, and $g_m(x) > 0$.
Then for the linear combination $k(x'\cond x)$ we have $Kp = p$, and it is reversible ($k(x'\cond x) p(x) = k(x\cond x')p(x')$) if and only if $w_{-m}g_{-m}(x) = w_{m}g_{m}(x), \forall m \in \mathbb{Z}$.
Note that for the test from Corollary \ref{th:m_step_test} we have $g_m(x) = g_{-m}(x)$.

For periodic orbit $\orb(x)$ of $f$, assume that the only positive weights are $w_1,\ldots, w_{T-1}$, where $T$ is the period of $\orb(x)$.
Then the linear combination is reversible if and only if $w_{T-m}g_{T-m}(x) = w_m g_{m}(x), \forall m \in \{1,\ldots,T-1\}$.
\end{proposition}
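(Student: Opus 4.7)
The plan is to mirror the single-kernel reversibility argument from Appendix \ref{app:o_kernel_reversiblity}, extended by linearity to the combination. Since each $k_m$ satisfies $K_m p = p$, the convex combination $k = \sum_m w_m k_m$ also preserves $p$, so the reverse kernel $r(x'\cond x) = k(x\cond x') p(x')/p(x)$ is well-defined and decomposes as $r = \sum_m w_m r_m$, where each $r_m$ was already computed in Appendix \ref{app:o_kernel_reversiblity}. Reversibility is then the pointwise identity $k(x'\cond x) = r(x'\cond x)$ as distributions in $x'$, which I will verify by integrating both sides against indicator functions of arbitrarily small neighborhoods around the distinguished support points $x$ and $\{f^{-m}(x)\}$.

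First I would handle the aperiodic (all orbit points distinct) case. Integrating $k$ and $r$ over a small ball $A(x)$ around the identity point gives, on both sides, the total staying mass $\sum_m w_m(1-g_m(x))$, so these always match automatically. Next, for each nonzero $m$, I would take a ball $A(f^{-m}(x))$ small enough to isolate the atom there from the other orbit points; the forward kernel contributes $w_{-m} g_{-m}(x)$ (from the $k_{-m}$ term, whose proposal $f^{-m}(x)$ lands in the ball), while the reverse kernel, via the change of variables $x' = f^{-m}(y)$ exactly as in Appendix \ref{app:o_kernel_reversiblity}, contributes $w_m g_m(f^{-m}(x)) p(f^{-m}(x)) / p(x) \cdot |\partial f^{-m}/\partial x| = w_m g_m(x)$, using that $K_m p = p$ forces $g_m(f^{-m}(x)) p(f^{-m}(x))|\partial f^{-m}/\partial x| = g_m(x) p(x)$. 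Equating yields precisely $w_{-m} g_{-m}(x) = w_m g_m(x)$ for all $m \in \mathbb{Z}$, and conversely this condition plus the identity-point match gives $k = r$.

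For the periodic case of period $T$, the orbit only has $T$ distinct points, so the indices $-m$ and $T-m$ label the same atom; the same computation carried out modulo $T$ with only $w_1,\dots,w_{T-1}$ nonzero produces the condition $w_{T-m} g_{T-m}(x) = w_m g_m(x)$ for $m \in \{1,\dots,T-1\}$. Finally, for the specific test of Corollary \ref{th:m_step_test}, the claim $g_m = g_{-m}$ follows because the set $\{mk : k \in \mathbb{Z}\}$ is invariant under negation, so the infimum defining $g_m$ and $g_{-m}$ is taken over the same collection of ratios.

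The only subtle step is the isolation of orbit atoms by small balls, which fails for returning (dense-on-itself) orbits; following Appendix \ref{app:whole_orbit_reversibility}, I would handle this by noting that the contributions of the convergent subsequence of atoms entering any fixed ball can be made arbitrarily small since the total mass $\sum_m w_m = 1$ is finite, so the limiting integrals still isolate single atoms up to vanishing error, and the criterion is recovered by passing to the limit.
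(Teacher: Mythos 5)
Your proposal is correct and essentially reproduces the paper's own argument in Appendix \ref{app:whole_orbit_reversibility}: decompose $r=\sum_m w_m r_m$, localize both $k$ and $r$ to small neighborhoods of the orbit atoms, perform the same change of variables $x'=f^{-m}(y)$ together with the invariance identity $g_m(f^{-m}(x))p(f^{-m}(x))|\partial f^{-m}/\partial x|=g_m(x)p(x)$, and read off the criterion $w_{-m}g_{-m}(x)=w_m g_m(x)$ (with indices taken mod $T$ in the periodic case), with the returning-orbit caveat handled via summability of the weights. No substantive differences.
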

\begin{proof}
See Appendix \ref{app:whole_orbit_reversibility}
\end{proof}

\subsection{Importance sampling via the orbital kernel}
\label{app:oSNIS}

\begin{wrapfigure}{r}{0.35\textwidth}
\vspace{-20pt}
  \begin{center}
    \includegraphics[width=0.32\textwidth]{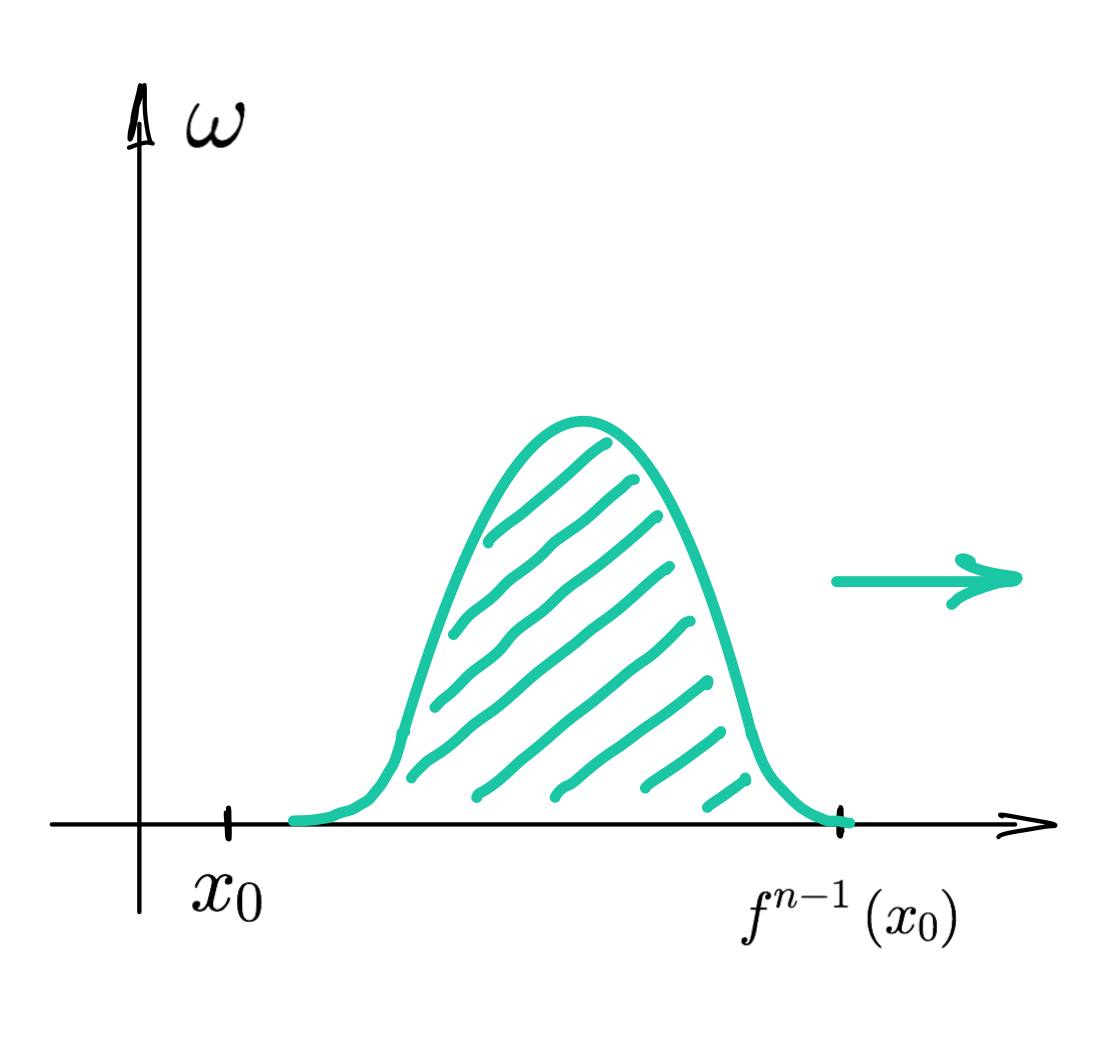}
  \end{center}
  \caption{The illustration of the orbital kernel on an infinite orbit.}
  \label{fig:wave_packet}
\end{wrapfigure}

As we outlined in Section \ref{sec:orbital_kernel}, the limiting behaviour of the orbital kernel depends on the orbit it operates on.
For infinite orbits, the kernel always moves the probability mass further along the orbit escaping any given point on the orbit since there is no mass flowing backward (the weight of $f^{-1}(x_0)$ is zero from the beginning).
Intuitively, we can think that the weights $\omega_i^t$ from \eqref{eq:recurrence_p_t} evolve in time as a wave packet moving on the real line (Fig. \ref{fig:wave_packet}).
With this intuition it becomes evident that the time average at every single point on the orbit converges to zero since the mass always escapes this point.
Therefore, if we want to accept several points from the orbit, we need to wait until the kernel ``leaves'' this set of points and then stop the procedure.
To avoid the explicit simulation of the kernel, we provide an approximation for the escape time in the following proposition.

\begin{proposition}{(Average escape time)\\}
\label{th:escape_time}
Consider the proper escaping orbital kernel ($Kp = p$, and $g(x) > 0$), and the initial distribution $p_0(x) = \delta(x-x_0)$.
The escape time $t_n$ is the number of iterations required to leave the set $\{x_0,f(x_0),\ldots,f^{n-1}(x_0)\}$, or equivalently the time of the first acceptance of $f^n(x_0)$. 
The expectation of $t_n$ is $\mean t_n = \sum_{i=0}^{n-1} 1/g(f^i(x_0))$.
\end{proposition}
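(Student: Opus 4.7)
The plan is to exploit the fact that, starting from $p_0(x)=\delta(x-x_0)$, the escaping orbital kernel induces a one-sided birth chain along the orbit: at each step the chain either stays at its current point $f^i(x_0)$ with probability $1-g(f^i(x_0))$ or advances to $f^{i+1}(x_0)$ with probability $g(f^i(x_0))$. There is no backward flow because the initial measure puts no mass on $f^{-1}(x_0)$, and the kernel only adds mass at $f(x)$ when the current state is $x$. In particular, the chain can only reach $f^n(x_0)$ by first passing through each of $f^0(x_0), f^1(x_0), \ldots, f^{n-1}(x_0)$ in order, so the escape time decomposes as
\begin{equation*}
t_n = \sum_{i=0}^{n-1} \tau_i,
\end{equation*}
where $\tau_i$ is the number of iterations the chain spends at $f^i(x_0)$ before advancing to $f^{i+1}(x_0)$.

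The next step is to identify the distribution of each waiting time $\tau_i$. By the Markov property, conditional on the chain arriving at $f^i(x_0)$, the number of consecutive ``stay'' events before the first ``advance'' is a geometric random variable on $\{1,2,\ldots\}$ with success probability $g(f^i(x_0))$. Hence $\mean \tau_i = 1/g(f^i(x_0))$. Moreover, the waiting times $\tau_0,\tau_1,\ldots,\tau_{n-1}$ are mutually independent because each resets after a successful advance.

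Applying linearity of expectation yields
\begin{equation*}
\mean t_n = \sum_{i=0}^{n-1} \mean \tau_i = \sum_{i=0}^{n-1} \frac{1}{g(f^i(x_0))},
\end{equation*}
which is the stated formula. The only subtlety worth double-checking is that no backward motion can occur in this setup; this is already guaranteed by the initial delta being at $x_0$ and the recursion in Appendix \ref{app:limiting_case}, which shows $\omega_i^t = 0$ for $i<0$ and all $t$. One can alternatively derive the same result purely by summing the multinomial series $\sum_{t_0,\ldots,t_{n-1}}(t_0+\cdots+t_{n-1}+n)\prod_i g(x_i)(1-g(x_i))^{t_i}$ using the identity $\sum_{t\geq 0}(t+1)g(1-g)^t = 1/g$, which is the route taken in the appendix; the probabilistic derivation above is simply a cleaner restatement of the same computation.
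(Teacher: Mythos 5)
Your proof is correct and reaches the same formula, but via a genuinely cleaner route than the paper's. The paper (Appendix~\ref{app:escape_time}) writes out the full multinomial expectation $\sum_{t_0,\ldots,t_{n-1}}\bigl(\sum_i(t_i+1)\bigr)\prod_i g(x_i)(1-g(x_i))^{t_i}$ and evaluates it by peeling off one index at a time with the shift identity $\sum_{t\geq 0}(t+1)g(1-g)^t = 1/g$, repeating the argument $n$ times. You instead decompose $t_n = \sum_{i=0}^{n-1}\tau_i$ probabilistically, where $\tau_i$ is the holding time at $f^i(x_0)$, observe that the chain is a pure birth chain on the orbit (no backward mass because the initial measure sits entirely at $x_0$), so each $\tau_i$ is an independent geometric on $\{1,2,\ldots\}$ with parameter $g(f^i(x_0))$, and conclude by linearity of expectation. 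This buys transparency: the result is immediate from standard facts about geometric random variables, and the independence/forward-only structure is made explicit rather than being encoded implicitly in the product form of the joint law. The paper's computation is more self-contained (it re-derives the geometric mean from scratch) but obscures the probabilistic structure that makes the answer obvious. You correctly flag the one subtlety --- that no backward flow occurs --- and point to $\omega_i^t = 0$ for $i<0$ as the justification, which is exactly right.
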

\begin{proof}
See Appendix \ref{app:escape_time}.
\end{proof}
\vspace{-10pt}
Using this proposition, we approximate the time average for points $\{x_0,\ldots,f^{n-1}(x_0)\}$ as follows.
Once the kernel escaped $f^{n-1}(x_0)$, the sum of each individual weight over time has converged.
Moreover, from Theorem \ref{th:limiting_case_escaping} we know $\lim_{t\to \infty} \sum_{t'=0}^t \omega_i^{t'} = 1/g(f^i(x_0))$.
To approximate the time-average we can divide this sum by the average escape time from Proposition \ref{th:escape_time}.
Thus, we estimate $p_{n-1}(x) \approx \sum_{i=0}^{n-1} \omega_i \delta(x-f^i(x_0))$, where
\begin{align}
    \begin{split}
    \omega_i =\frac{1}{\mean t_n}\lim_{t\to \infty} \sum_{t'=0}^t \omega_i^{t'} = \frac{1/g(f^i(x_0))}{\sum_{j=0}^{n-1} 1/g(f^j(x_0))} = \frac{p(f^i(x_0))\big|\frac{\partial f^{i}}{\partial x}\big|_{x=x_0}}{\sum_{j=0}^{n-1}p(f^j(x_0))\big|\frac{\partial f^{j}}{\partial x}\big|_{x=x_0}}.
    \end{split}
    \label{eq:limiting_approx}
\end{align}
On the last step, the ratio $g(f^i(x_0))/g(f^j(x_0))$ can be found from \eqref{eq:preserving_gp}, i.e., using the fact that the kernel $K$ preserves the target density.
The resulting formula is tightly related to the self-normalized importance sampling (SNIS) \citep{andrieu2003introduction}.
Indeed, if $f$ preserves some density $q$ on the orbit $\orb(x_0)$, then we can rewrite the Jacobians using this density and put it into \eqref{eq:limiting_approx} as follows.
\begin{align}
    \forall i \;\;\; \bigg|\frac{\partial f^{i}}{\partial x}\bigg|_{x=x_0} = \frac{q(x_0)}{ q(f^i(x_0))}; \text{ hence, }\omega_i = \frac{p(f^i(x_0))/q(f^i(x_0))}{\sum_{j=0}^{n-1}p(f^j(x_0))/q(f^j(x_0))}
\end{align}
\begin{wrapfigure}{r}{0.4\textwidth}
\vspace{-0pt}
  \begin{center}
    \includegraphics[width=0.39\textwidth]{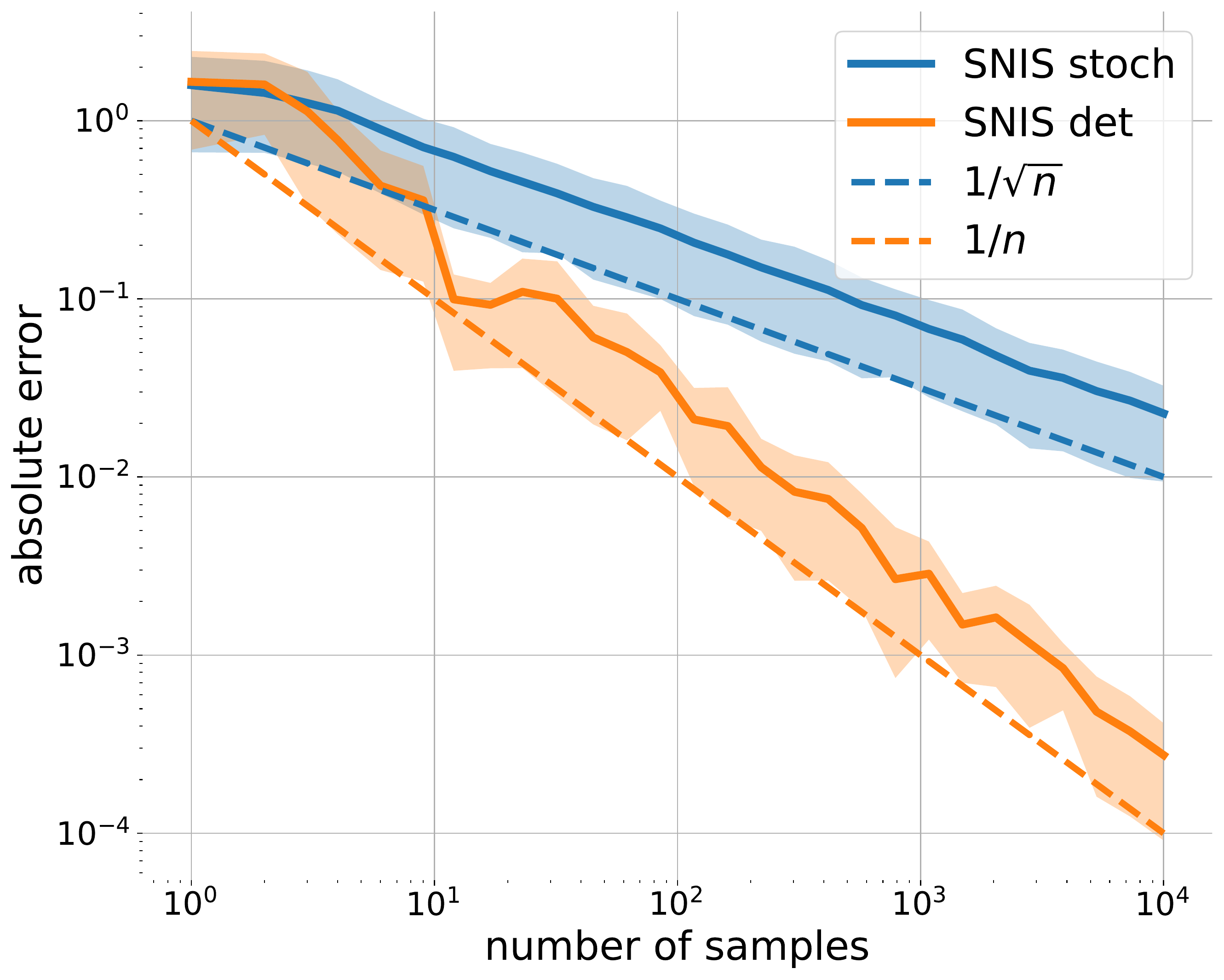}
  \end{center}
  \caption{The comparison of deterministic and stochastic SNIS.
  We average across $200$ independent runs, starting from random points for deterministic SNIS.
  Solid lines demonstrate the mean error of the estimate, and the shaded area lies between $0.25$ and $0.75$ quantiles of the error. Dashed lines demonstrate the asymptotic of estimates.}
  \label{fig:snis}
\vspace{-20pt}
\end{wrapfigure}

We illustrate the usefulness of the formula \eqref{eq:limiting_approx} with the following example.
Consider the target distribution $p$, and the proposal $q$.
Then one can estimate the mean of the target using the conventional SNIS procedure, which we call stochastic SNIS.
Stochastic SNIS operates as follows. It samples from the proposal distribution with density $q(x)$, and then evaluates the weight of sample $x_i$ as $w_i = p(x_i)/q(x_i)$, once all the weights are evaluated it normalizes them as $w_i \gets w_i / \sum_j w_j$.
The normalization of the weights is usually done in the setting where the target density is known up to the normalization constant.
Another way to collect samples is to consider a deterministic map that preserves $q$ and then re-weight the trajectory of this map (we call it deterministic SNIS).
To perform the deterministic SNIS we choose such $f$ that (I) preserves the density of the proposal $q(x)$ (the same as for stochastic SNIS), and (II) has dense aperiodic orbits on the state space.
These two goals are achieved by the map described in \citep{murray2012driving}.
Namely, the deterministic map is $f(x) = F^{-1}((F(x) + a) \text{ mod } 1)$, where $a$ is the irrational number (to cover densely the state space) and $F$ is the CDF of $q$ (to preserve $q$).
Another way of thinking about iterations of $f(x)$ is that it firstly sample from $\text{Uniform}[0,1]$ using the Weyl's sequence $u_{i+1} = u_i + a \text{ mod } 1$, and then use the inverse CDF to obtain samples from the desired distribution.

For the target distribution we take the mixture of two Gaussians $p(x) = 0.5\cdot\mathcal{N}(x\cond -2,1) + 0.5\cdot\mathcal{N}(x\cond 2,1)$, and for the proposal we take single Gaussian $q(x)=\mathcal{N}(x\cond 0,2)$.
For the irrational number $a$ in the deterministic SNIS, we take the float approximation of $\sqrt{2}$.
Collecting samples from both procedures we estimate the mean of the target distribution and evaluate the squared error of the estimation.
Fig.~\ref{fig:snis} demonstrates that the deterministic SNIS allows for a more accurate estimate having the same number of samples.

The practical benefit of the formula \eqref{eq:limiting_approx} comes from the fact that we can perform deterministic SNIS even when we don't know the stationary distribution $q$ of the deterministic map.

\subsection{Orbital MCMC with Hamiltonian dynamics}
\label{app:oHMC}

Building upon Algorithms \ref{alg:omcmc} and \ref{alg:omcmc_contracting}, we consider special cases, which use the Hamiltonian dynamics for the deterministic map $f(x,v)$.
That is, we consider the joint distribution $p(x,v) = p(x)\mathcal{N}(v\cond 0,\mathbbm{1})$, and for the map $f(x,v)$ we take the Leapfrog integrator ($f(x,v) = [x',v'']$):
\begin{align}
\begin{split}
    &v' = v - \frac{\eps}{2} \nabla_x (-\log p(x)),\\
    &x' = x + \eps \nabla_v (-\log p(v')),\\
    &v'' = v' - \frac{\eps}{2} \nabla_x (-\log p(x')),
\end{split}
\end{align}
where $\eps$ is the step-size. Adding the directional variable as proposed in Algorithm \ref{alg:omcmc} we obtain the Orbital-HMC algorithm. Note that the Jacobian of the map is $\big|\partial f(x,v)/\partial [x,v]\big| = 1$.

For the contractive map in Algorithm \ref{alg:omcmc_contracting}, we consider the same Hamiltonian dynamics but with friction. To add friction, we follow \citep{francca2020conformal}, but use the velocity Verlet integrator instead of the coordinate. The resulting integrator is ($f(x,v) = [x',v'']$):
\begin{align}
\begin{split}
    &v' = \beta\big[v - \frac{\eps}{2} \nabla_x (-\log p(x))\big],\\
    &x' = x + \frac{\eps}{2} (\beta^{-1} + \beta) \nabla_v (-\log p(v')),\\
    &v'' = \beta\big[v' - \frac{\eps}{2} \nabla_x (-\log p(x'))\big],
\end{split}
\end{align}
where $\beta$ is the contractive coefficient. Note that $\beta = 1$ yields the standard Leapfrog. The Jacobian of the map is $\big|\partial f(x,v)/\partial [x,v]\big| = \beta^{2n}$, where $n$ is the number of dimensions of the target distribution.
For $\beta < 1$, the map becomes contractive with the stationary points at $\nabla_x \log p(x) = 0$.
We refer to the resulting algorithm as Opt-HMC due to its optimization properties.

To tune the hyperparameters for all algorithms we use the ChEES criterion \citep{hoffman2021adaptive}. During the initial period of adaptation, this criterion optimizes the maximum trajectory length $T_{\text{max}}$ for the HMC with jitter (trajectory length at each iteration is sampled $\sim \text{Uniform}(0,T_{\text{max}})$). To set the stepsize of HMC we follow the common practice of keeping the acceptance rate around $0.65$ as suggested in \citep{beskos2013optimal}.
We set this stepsize via double averaging as proposed in \citep{hoffman2014no} and considered in ChEES-HMC.
For Opt-HMC, we don't need to set the trajectory length, but we use the step size yielded at the adaptation step of ChEES-HMC. The crucial hyperparameter for this algorithm is the friction coefficient $\beta$, which we set to $\sqrt[n]{0.8}$, where $n$ is the number of dimensions of the target density, thus setting the contraction rate to $0.64$.

In Figures \ref{fig:app_mean_error} and \ref{fig:app_std_error}, we compare the errors in the estimation of the mean and the variance of the target distribution as a function of the number of gradient evaluations (which we take as a hardware-agnostic estimation of computation efforts).
Opt-HMC provides the best estimate for the mean value for all distributions except for the ill-conditioned Gaussian (where Orbital-HMC demonstrates the fastest convergence).
Another downside of Opt-HMC is that it relatively poorly estimates the variance of the target as provided in Fig. \ref{fig:app_std_error}, which could also be explained by the introduced contraction of space.
Note that Orbital-HMC always performs comparably or better than competitors.

\begin{figure}[h]
    \centering
    \includegraphics[width=0.24\textwidth]{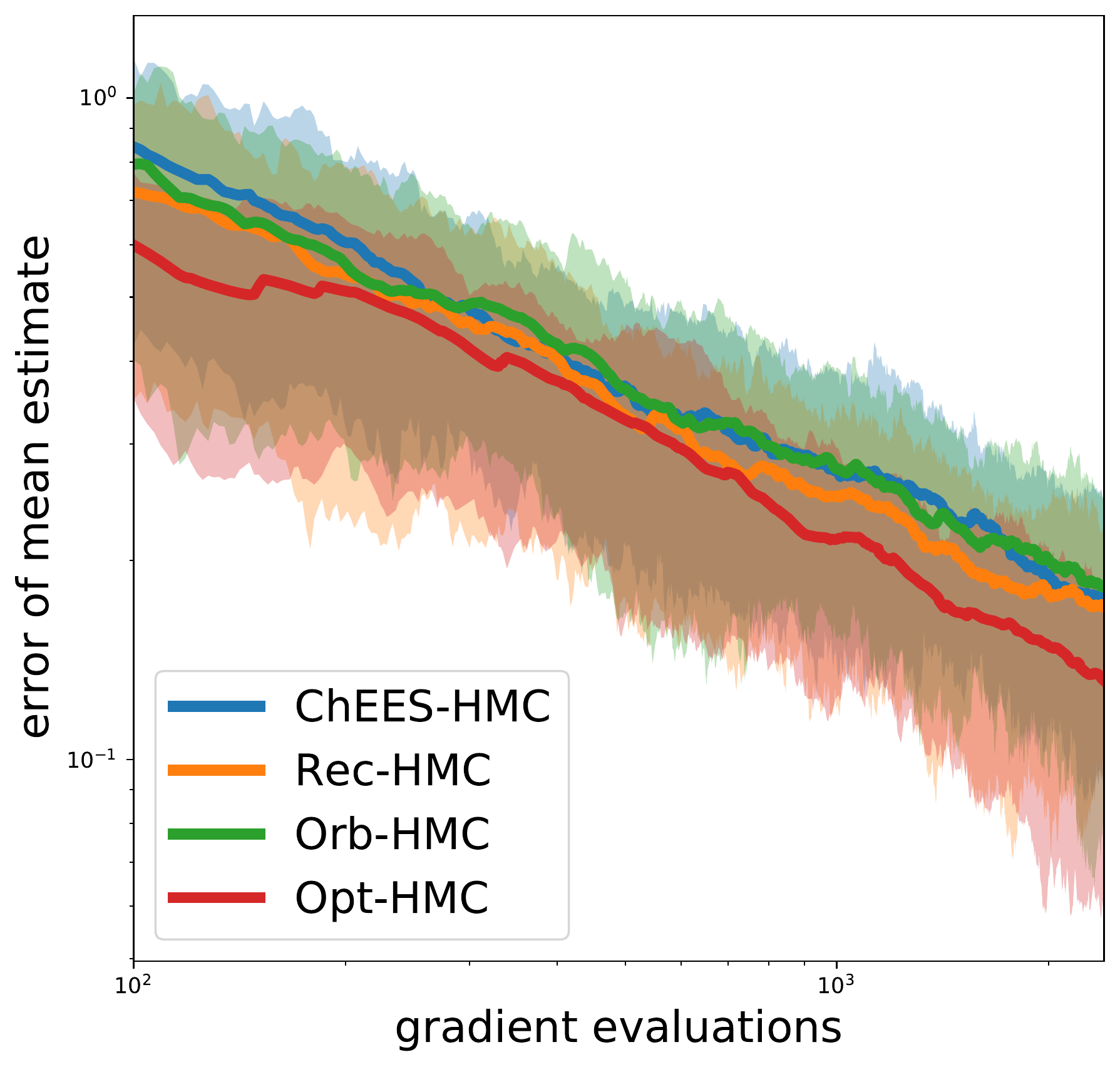}
    \includegraphics[width=0.24\textwidth]{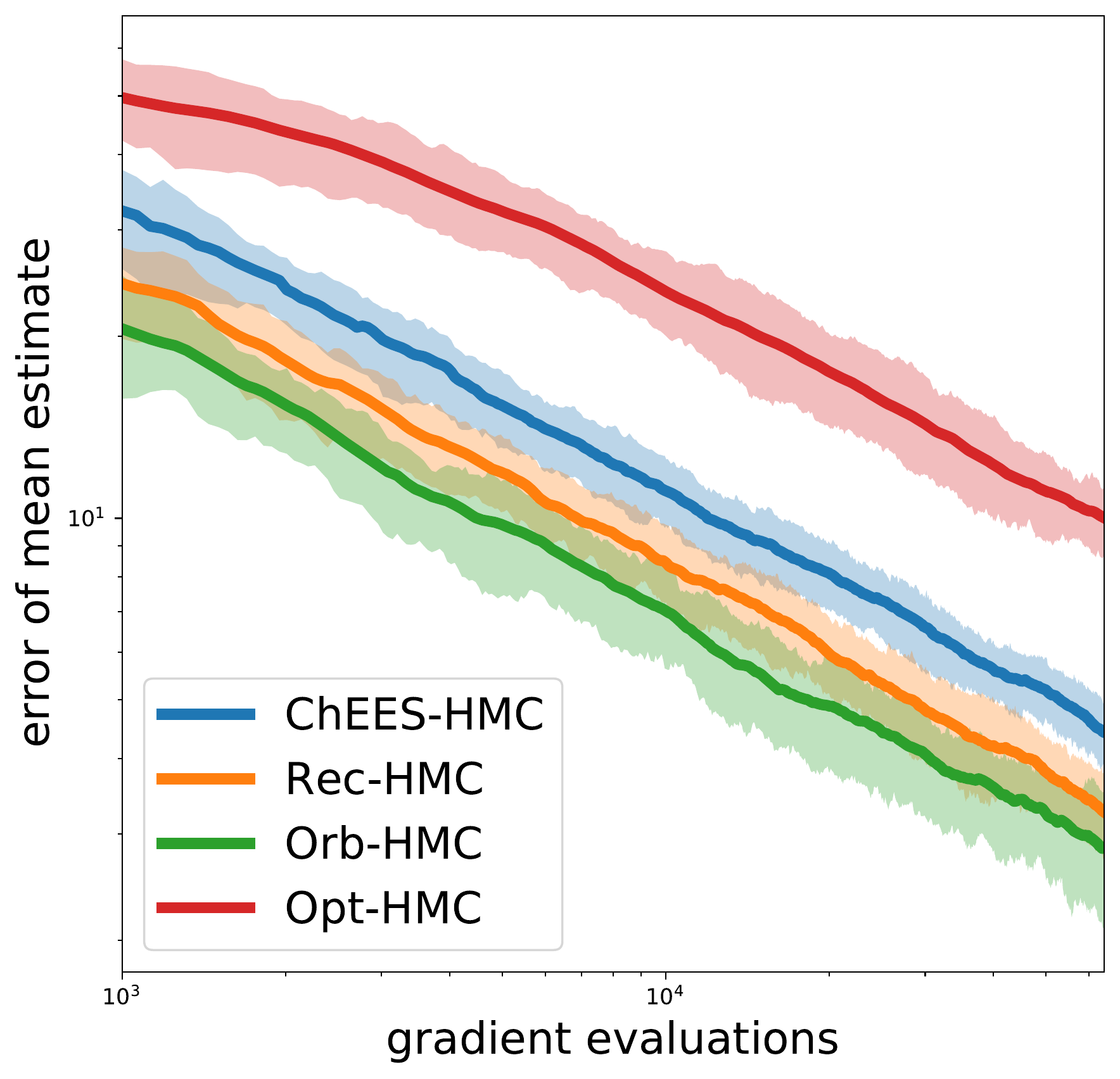}
    \includegraphics[width=0.24\textwidth]{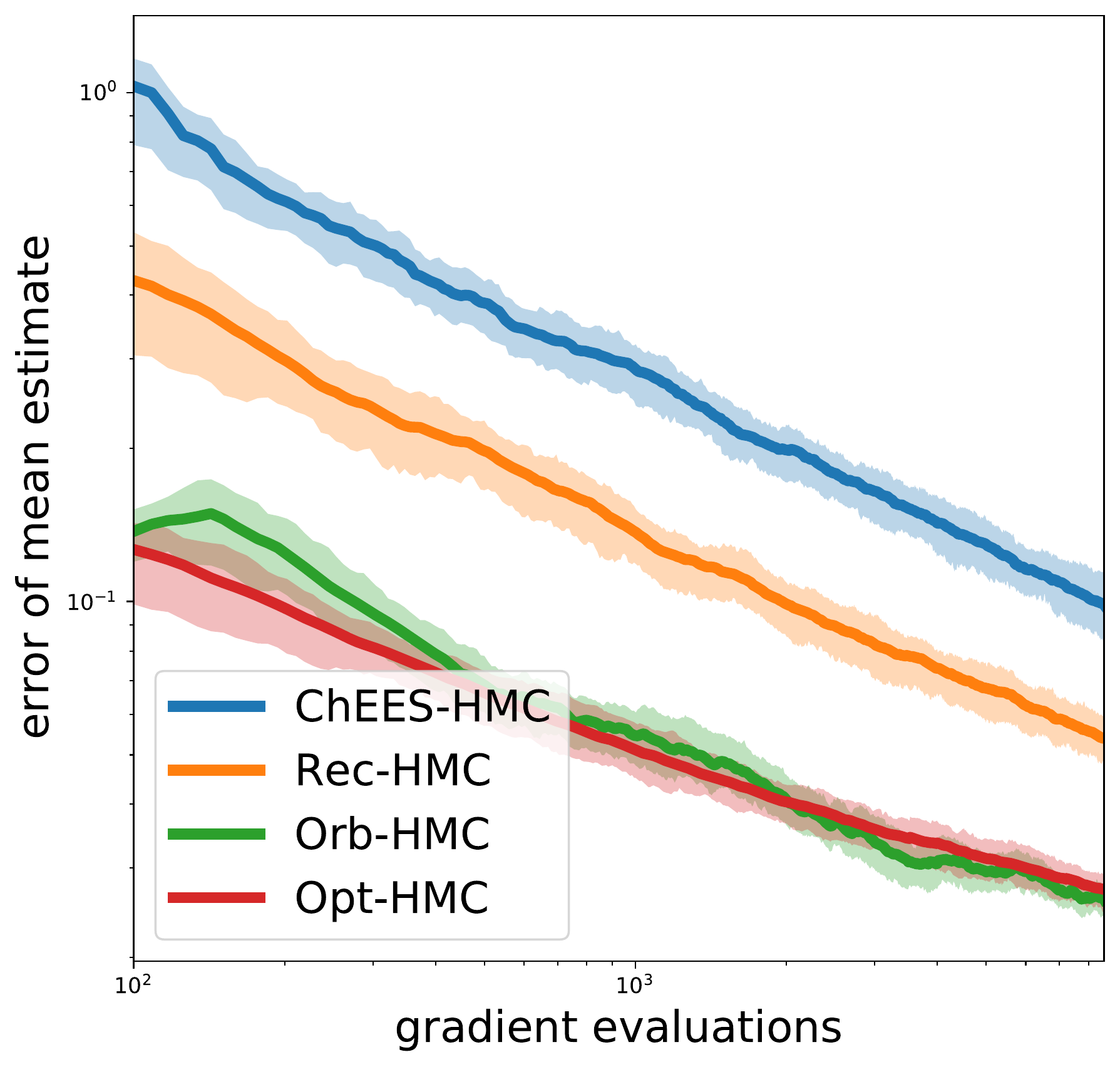}
    \includegraphics[width=0.24\textwidth]{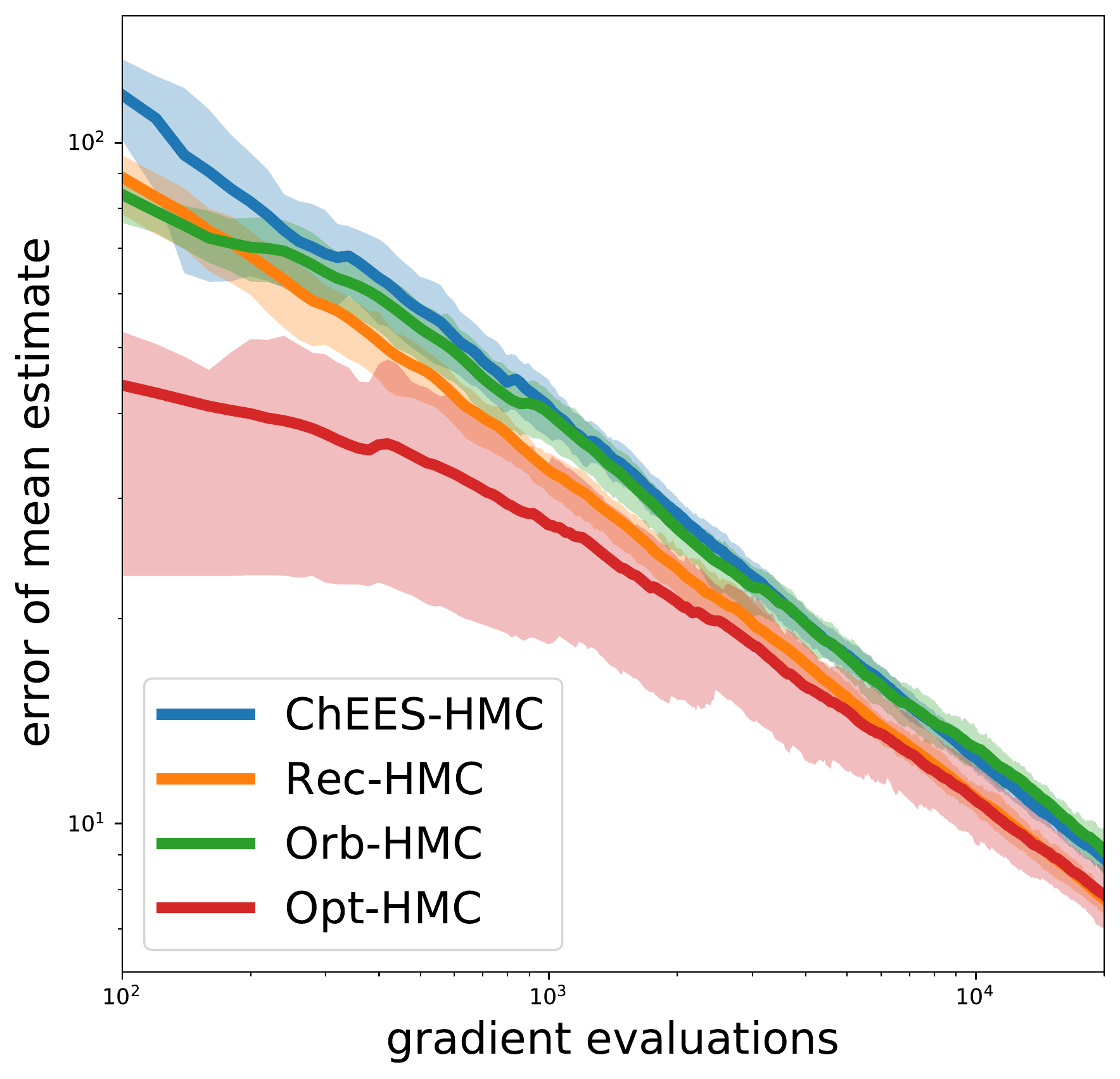}
    \caption{From left to right: the error of mean estimation on Banana, ill-conditioned Gaussian, logistic regression, Item-Response model. Every solid line depicts the mean of the absolute error averaged across $100$ independent chains. The shaded area lies between $0.25$ and $0.75$ quantiles of the error.}
    \label{fig:app_mean_error}
\end{figure}
\begin{figure}[h]
    \centering
    \includegraphics[width=0.24\textwidth]{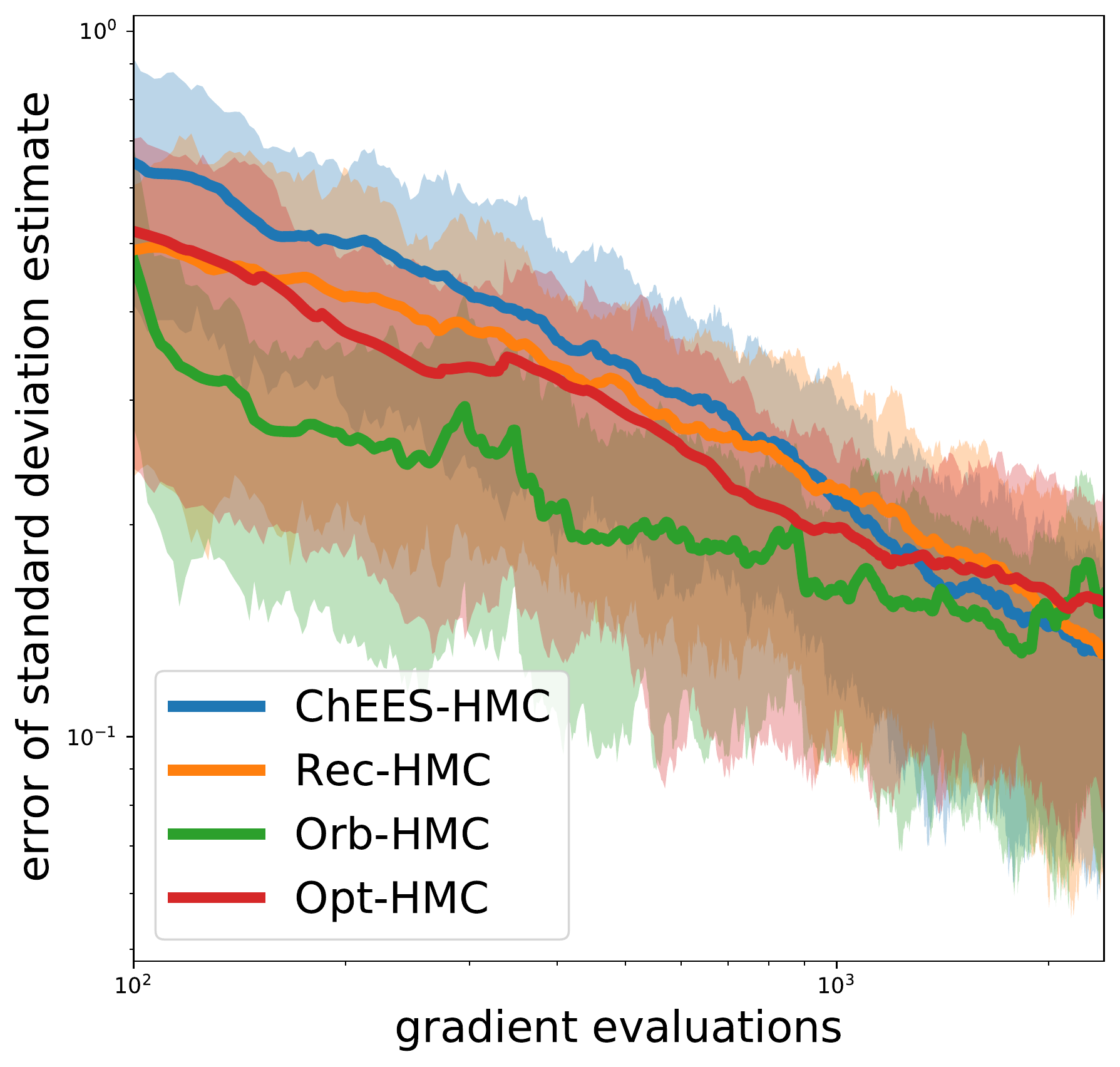}
    \includegraphics[width=0.24\textwidth]{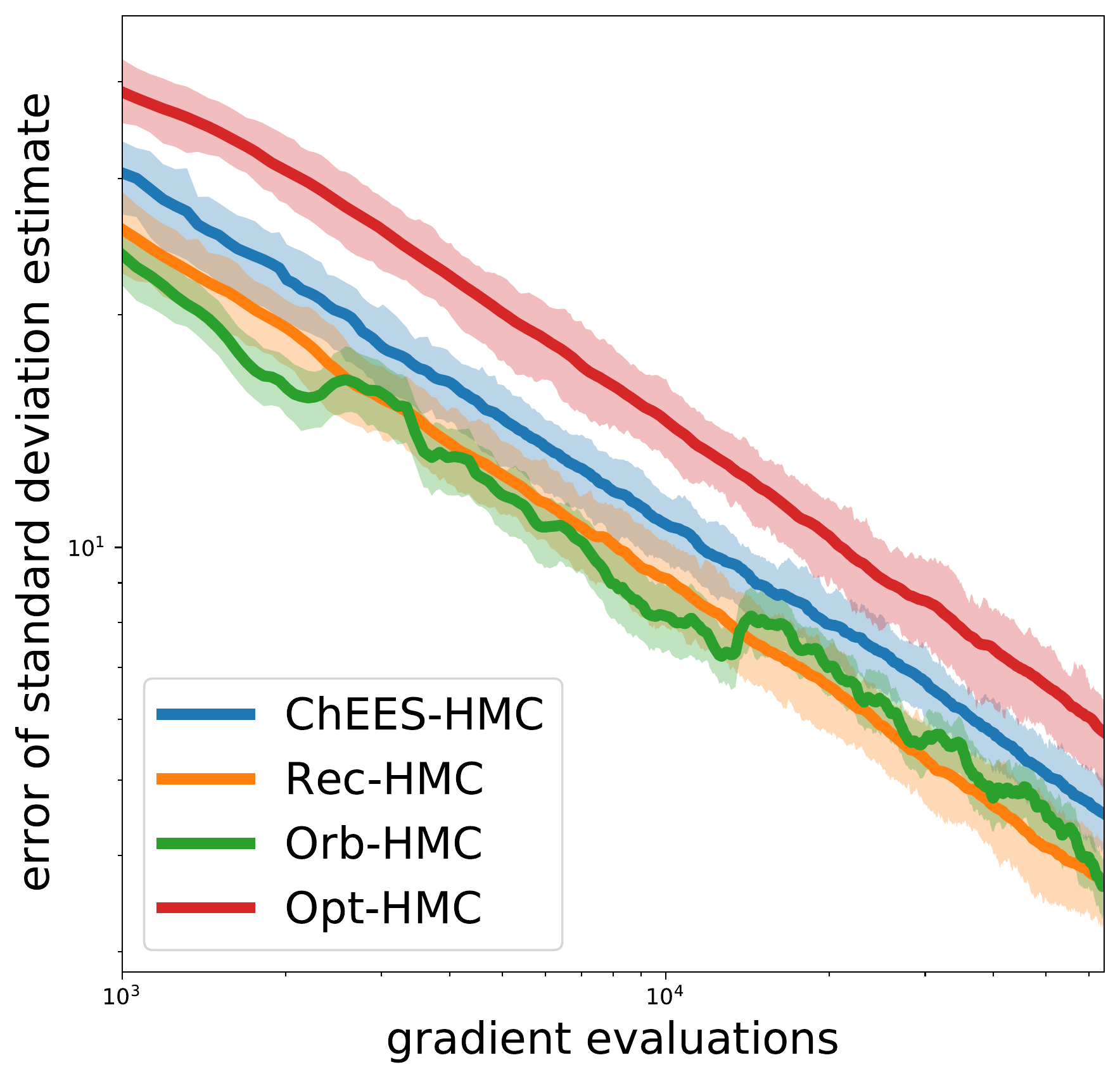}
    \includegraphics[width=0.24\textwidth]{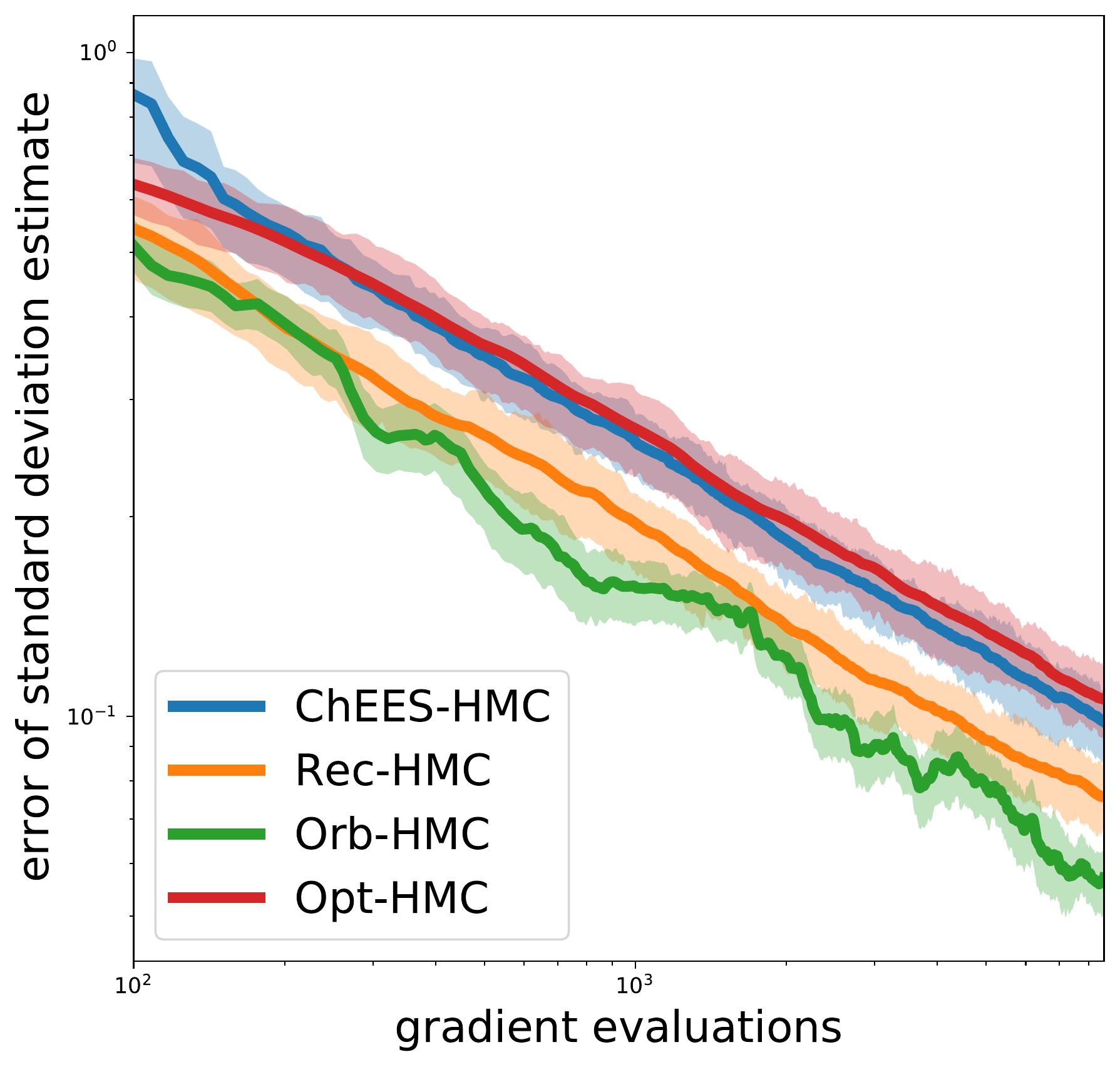}
    \includegraphics[width=0.24\textwidth]{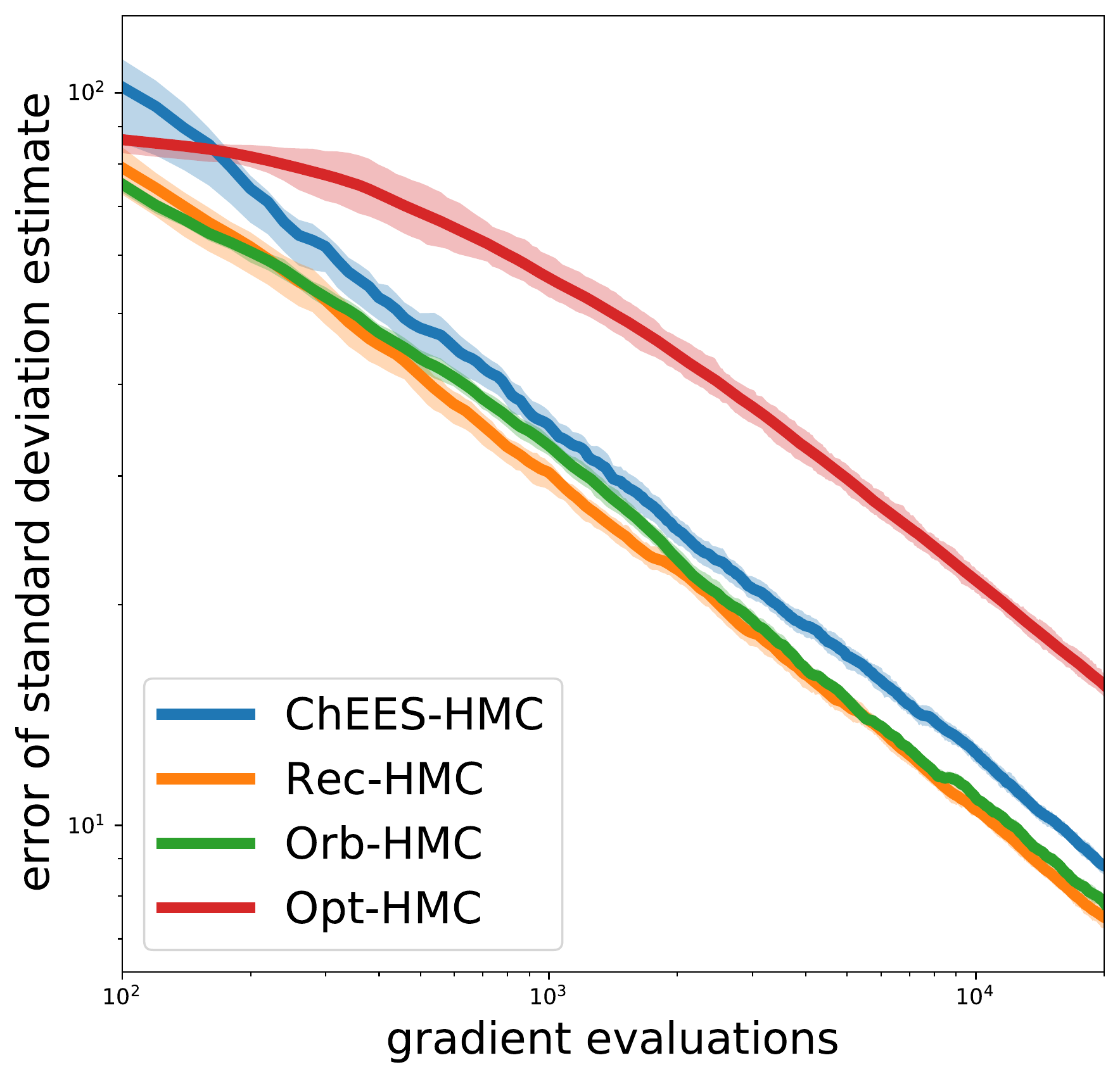}
    \caption{From left to right: the error of the variance estimation on Banana, ill-conditioned Gaussian, logistic regression, Item-Response model. Every solid line depicts the mean of the absolute error averaged across $100$ independent chains. The shaded area lies between $0.25$ and $0.75$ quantiles of the error.}
    \label{fig:app_std_error}
\end{figure}

\subsection{Distributions}
\label{app:dists}

Here we provide analytical forms of considered target distributions.

\textbf{Banana (2-D):}
\begin{equation}
p(x_1, x_2) = \Normal(x_1 | 0, 10)\Normal(x_2 | 0.03(x_1^2-100), 1)
\end{equation}

\textbf{Ill-conditioned Gaussian (50-D):}
\begin{equation}
p(x) = \Normal(x|0, \Sigma),
\end{equation}
where $\Sigma$ is diagonal with variances from $10^{-2}$ to $10^{2}$ in the log scale.

\textbf{Bayesian logistic regression (25-D):}\\
For the Bayesian logistic regression, we define likelihood and prior as
\begin{equation}
    p(y=1\cond x, \theta) = \frac{1}{1+\exp(-x^T\theta_w+\theta_b)}, \;\;\; p(\theta) = \Normal(\theta\cond 0, \mathbbm{1}).
\end{equation}
Then the unnormalized density of the posterior distribution for a dataset $D=\{(x_i,y_i)\}_i$ is
\begin{equation}
    p(\theta\cond D) \propto \prod_i p(y_i\cond x_i, \theta) p(\theta).
\end{equation}
We use the German dataset ($25$ covariates, $1000$ data points).

\textbf{Item-response theory (501-D):}\\
The model is defined by the joint distribution:
\begin{align}
    p(y, \alpha, \beta, \delta) = & \prod_{n=1}^N\text{Bernoulli}(y_n\cond\sigma(\alpha_{j_n} - \beta_{k_n} + \delta)) \cdot \\
    &\cdot \prod_j\Normal(\alpha_j\cond0,1) \prod_k \Normal(\beta_k\cond 0,1) \Normal(\delta\cond 0.75,1).
\end{align}
Here $\alpha_j$ could be interpreted as the skill level of the student $j$; $\beta_k$ is the difficulty of the question $k$; then $y_n$ is the answer of a student to a question. We consider $100$ students, $400$ questions and $30105$ responses. We generate the data from the prior.

\subsection{Usage of neural models}
\label{app:neural_models}

In this section we discuss how one can possibly use the expressive learnable models (such as flows \citep{rezende2015variational, dinh2016density}) together with the orbital kernel to design an efficient sampler.
As we discuss in Section \ref{sec:practice} and Appendix \ref{app:oHMC}, to design an unbiased kernel one needs to design the periodic function.
Therefore, in the next two subsection we consider the design of periodic functions using the family of normalizing flows.

\subsubsection{Topologically conjugate sampler}
This section operates similarly to \citep{hoffman2019neutra}.
That is, having some simple function $f$ and a learnable continuous bijection $T$ one can sample using the topologically conjugate iterated function $h = T^{-1}\circ f \circ T$.
Indeed, if the evaluation of $f$ is cheap, then we can evaluate iterations of $h$ as
\begin{align}
    h^n = T^{-1}\circ f^n \circ T.
\end{align}
For instance, we consider a simple periodic function $f$ assuming that all the knowledge about the target distribution is encapsulated in $T$.
Namely, we take Hamiltonian dynamics that can be integrated exactly:
\begin{align}
    H(x,v) = & \frac{1}{2}x^Tx + \frac{1}{2}v^Tv, \\
    x_i(\tau) = & x_i(0)\cos(\tau) + v_i(0)\sin(\tau),\\
    v_i(\tau) = & -x_i(0)\sin(\tau) + v_i(0)\cos(\tau).
\end{align}
These equations define a continuous flow $f(x,\tau)$.
To obtain discrete rotations, we can fix the evolution time either as $\tau=2\pi\frac{1}{T}$ (obtaining periodic orbits with period $T$), or $\tau = 2\pi a$, where $a$ is irrational number (obtaining returning orbits).

\subsubsection{Periodic flows}
Instead of the learning of the target space embedding, as in the previous section, we can learn the deterministic function $f$ itself.
For that purpose, we firstly introduce \textit{periodic coupling layer}.
Similarly to the coupling layer from \citep{dinh2016density}, we define the periodic coupling layer $l(x,\tau): \mathbb{R}^D \to \mathbb{R}^D$ as follows.
\begin{align}
    y_{1:d} =& x_{1:d} \\
    y_{d+1:D} =& x_{d+1:D} \odot \exp\bigg(\sin(\omega\tau) s(x_{1:d})\bigg) + \sin(\omega\tau) h(x_{1:d})
\end{align}
Here $s(\cdot)$ and $h(\cdot)$ are neural networks that define scale and shift respectively; $\omega$ is the circular frequency, which is a hyperparameter.
Stacking these layers we obtain the \textit{periodic flow} $f(x,\tau)$, which is invertible for fixed $\tau$ and has tractable Jacobian.
To guarantee periodicity, one can choose $\omega_i$ for each layer $l_i$ as a natural number.
Thus, we guarantee that $f(x,0) = x$ and $f(x,2\pi n) = x$ for natural $n$.

However, such flow does not satisfy $f(x,\tau_1+\tau_2) = f(f(x,\tau_1),\tau_2)$.
To ensure this property, we extend the state space by the auxiliary variable $\tau$ and introduce function $\widehat{f}((x,\tau),\Delta\tau) = [x',\tau']$ on the extended space:
\begin{align}
    \tau' = (\tau+\Delta\tau) \; \text{mod} \; 2\pi, \;\;
    x' = f\bigg(f^{-1}(x,\tau), \tau'\bigg),
\end{align}
where the inversion of $f$ is performed w.r.t. the $x$-argument.
By the straightforward evaluation, we have
\begin{align}
    \widehat{f}((x,\tau),0) = \widehat{f}((x,\tau),2\pi) = (x,\tau), \;\; 
    \widehat{f}\bigg(\widehat{f}((x,\tau),\Delta\tau_1),\Delta\tau_2\bigg) = \widehat{f}((x,\tau),\Delta\tau_1+\Delta\tau_2).
\end{align}
Finally, as in the previous section, we can obtain discrete rotations by considering $\Delta\tau = 2\pi\frac{1}{n}$ or $\Delta\tau = 2\pi a$ for irrational $a$.


\end{document}